\PassOptionsToPackage{pdftex}{xcolor}

\documentclass[11pt, a4paper]{gdm_format}
\usepackage[authoryear, sort&compress, round]{natbib}

\usepackage{amsmath,amsfonts,bm}

\def\eqref#1{equation~\ref{#1}}

\def\1{\bm{1}}

\DeclareMathAlphabet{\mathsfit}{\encodingdefault}{\sfdefault}{m}{sl}
\SetMathAlphabet{\mathsfit}{bold}{\encodingdefault}{\sfdefault}{bx}{n}

\usepackage{hyperref}
\hypersetup{
    colorlinks=true,
    citecolor={black!60},
    linkcolor={blue!50!black},
    urlcolor={blue!50!black}
}

\usepackage[utf8]{inputenc}
\usepackage[T1]{fontenc}

\usepackage{url}
\usepackage{booktabs}
\usepackage{array}
\usepackage[export]{adjustbox}
\usepackage{amsfonts}
\usepackage{amssymb}
\usepackage{amsmath}
\usepackage{amsthm}
\newtheorem{theorem}{Theorem}
\newtheorem{proposition}{Proposition}
\newtheorem{definition}{Definition}

\usepackage{nicefrac}
\usepackage{microtype}
\usepackage[table]{xcolor}
\definecolor{rowhl}{RGB}{230,242,255}
\usepackage{graphicx}
\usepackage{algorithm}
\usepackage{algorithmic}
\usepackage{float}
\usepackage{multirow}
\usepackage{pifont}
\usepackage{subcaption}
\usepackage{twemojis}
\usepackage{CJKutf8}
\usepackage{mathtools}
\usepackage{enumitem}

\usepackage{wrapfig}
\usepackage{tabularray} 
\usepackage{tabularx}
\usepackage{placeins}

\usepackage{fontawesome5}
\usepackage{xspace}

\usepackage{eso-pic}
\newcommand{\TitlePageLogo}{%
  \AddToShipoutPictureBG*{%
    \AtPageUpperLeft{%
      \put(\dimexpr 0.75in\relax,-\dimexpr 0.8in\relax){%
        \includegraphics[width=5cm]{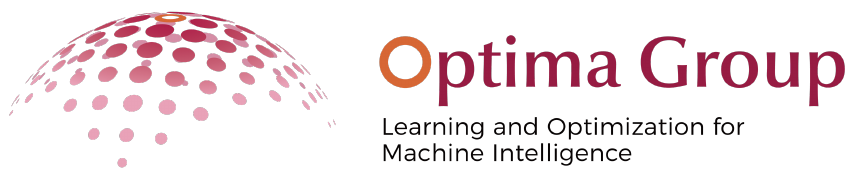}%
      }%
    }%
  }%
}

\newcommand{\ourmethod}{\texttt{NSC}}
\newcommand{\ouralgo}{\texttt{NSC-DP}}    

\usepackage{wasysym}

\titlespacing*{\paragraph}{0pt}{\parskip}{1em}

\title{Neural Spectral Capacity:\\Measuring and Designing Architectures from Network Specification Alone}

\correspondingauthor{Zhichao Lu (zhichao.lu@cityu.edu.hk)}

\author[ ]{Chenyu Zhu}
\author[ ]{Ruoyu Zhao}
\author[ ]{Zhichao Lu}

\affil[ ]{Department of Computer Science, City University of Hong Kong}

\begin{document}

\TitlePageLogo

\begin{abstract}
    Modern Transformer design and compression both reduce to allocating capacity under a budget.
    The standard scalars for these decisions, \#Params and \#FLOPs, capture size and compute
    but not architectural structure---two architectures with identical parameter budgets
    but different depth-width, head, or FFN allocations receive identical scores yet behave differently.
    We propose Neural Spectral Capacity (\ourmethod{}), a closed-form scalar grounded in the
    singular-value spectrum of each weight matrix. Under standard random initialization,
    the Marchenko--Pastur law renders \ourmethod{} computable from the architectural specification alone,
    with no model instantiation, data, or gradients. Its layer-wise additive structure admits \ouralgo{},
    an exact dynamic-programming solver returning the architecture globally maximizing \ourmethod{}
    under resource constraints in seconds on a CPU---a guarantee that black-box search over existing training-free
    proxies cannot provide.
    Empirically, \ourmethod{} outperforms \#Params, \#FLOPs, and representative training-free proxies in ranking across
    seven Transformer and CNN families (on FlexiBERT, $\tau{=}0.505$ on pairs differing in \#Params by ${<}10\%$,
    where \#Params collapses to $0.082$); \ouralgo{} discovers a Transformer-XL architecture on
    WikiText-103 that beats the human-designed baseline in $2$ seconds; and prunes LLaMA-7B to
    the best 5.7B model across eight commonsense reasoning tasks \emph{without any calibration data},
    ${\sim}5900{\times}$ faster than the strongest training-free proxy baseline.
\end{abstract}

\maketitle


\begin{center}
  \href{https://github.com/Optima-CityU/neural-spectral-capacity}{\faGithub  \xspace \texttt{https://github.com/Optima-CityU/neural-spectral-capacity}}
\end{center}

\section{Introduction} \label{sec:introduction}

Designing or compressing a modern Transformer reduces to a single underlying question: under a fixed parameter or compute budget, how should capacity be distributed across architectural components?
At pretraining time, this manifests as choices over depth-width tradeoffs, FFN ratio, attention-head sharing (GQA, MQA~\citep{ainslie2023gqa}), and per-layer expert allocation in MoEs~\citep{jiang2024mixtral,deepseekai2024deepseekv3}.
At post-training time, structured pruning of frontier models~\citep{xia2024sheared,ma2023llmpruner,munoz2024lonas} poses the same question in reverse: under a deployment budget, which components should be retained?
These decisions are currently made by intuition, small-scale ablation, or expensive search.
The standard scalar tools available to a practitioner are \#Params and \#FLOPs, which respectively capture \emph{size} and \emph{compute} but say nothing about \emph{structure}---two architectures with the same parameter budget but different depth-width tradeoffs, head allocations, or FFN ratios receive identical scores yet behave differently when trained or deployed.
Yet structure is precisely what an architectural choice \emph{is}.
Is there a scalar that captures architectural structure, computable from the specification alone?

\paragraph{Why existing training-free proxies fall short for this question.}
The closest existing candidates come from training-free proxies for neural architecture search~\citep{abdelfattah2021zerocost,mellor2021naswot,wang2025wpca}: scalar functions that score architectures at initialization.
Two limitations make them awkward as capacity-allocation tools.
First, each proxy instantiates a randomly-initialized network and \emph{measures} something on it (activation patterns, gradient norms, attention focus), with the resulting score depending on (architecture, input): real data, synthetic samples, or random Gaussian vectors are all valid choices, and the score changes with the choice.
Second, each proxy is a black-box scalar function of the entire network, so optimizing it under resource constraints requires evolutionary or random search, returning the best architecture \emph{visited} rather than the best architecture \emph{possible} under the proxy.
Both limitations are amplified at LLM scale, where instantiating each candidate is expensive and the search space is too large to explore by sampling.

\paragraph{Neural Spectral Capacity.}
We propose \emph{Neural Spectral Capacity} (\ourmethod{}), a scalar quantity that is a function of the architectural specification alone: deterministic in the spec, computable without ever instantiating a model.
Like \#Params and \#FLOPs, \ourmethod{} takes the architectural spec as input and returns a number---no random initialization, no input data, no forward or backward pass, no learned hyperparameters.
The construction is grounded in the singular-value spectrum of each weight matrix, which we connect to the mutual information of the corresponding linear Gaussian channel~\citep{telatar1999capacity}.
Per-matrix capacity is summed across the network with multi-head attention decomposed per head, yielding an architecture-level score that is layer-wise additive.
Under standard random initialization, the Marchenko--Pastur law~\citep{marchenko1967distribution} renders the expected per-matrix capacity exactly computable from the matrix's dimensions and initialization variance alone, so the entire score reduces to a closed-form expression in architectural parameters.

\paragraph{Two practical advantages of \ourmethod{}.}
\ourmethod{}'s structure has two practical consequences that competing proxies do not jointly admit.
First, \ourmethod{} \emph{discriminates} between architectures where \#Params and FLOPs cannot.
At a fixed parameter or compute budget---the typical setting for capacity-allocation decisions in practice---all candidate architectures have nearly identical \#Params and FLOPs by construction, so neither quantity can distinguish them.
\ourmethod{} is sensitive to structural choices---depth, head allocation, FFN ratio---that \#Params and FLOPs do not see, and assigns different scores accordingly.
Second, \ourmethod{}'s layer-wise additivity admits a structural property no other quality-tracking proxy provides: the resource-constrained maximization of \ourmethod{} reduces to a bounded knapsack solvable exactly by dynamic programming.
We exploit this with \ouralgo{} (Algorithm~\ref{alg:nsc_dp}), which returns the architecture \emph{globally maximizing} \ourmethod{} under resource constraints---both globally optimal under the proxy and orders of magnitude faster than heuristic alternatives, on a single CPU core for spaces of up to $10^{32}$ candidate architectures.

The contributions of this work are threefold:
\begin{itemize}[leftmargin=1.5em, itemsep=0.3em, topsep=0.3em]
    \item \textbf{\ourmethod{}, a closed-form architectural scalar for Transformers} that is deterministic in the architectural spec---no instantiation, no input data, no gradient computation.
    Grounded in the singular-value spectrum of each weight matrix, \ourmethod{} reduces under standard random initialization to a deterministic expression in dimensions and initialization variance alone via the Marchenko--Pastur law (Theorem~\ref{thm:mp}, \S\ref{sec:spectral_capacity}--\S\ref{sec:closed_form}).
    Unlike \#Params and \#FLOPs, \ourmethod{} captures architectural structure---depth, head allocation, FFN ratio---that size and compute miss.

    \item \textbf{\ouralgo{}, an exact dynamic-programming solver enabled by \ourmethod{}'s additive structure} (\S\ref{sec:nsc_dp}).
    \ouralgo{} returns the architecture \emph{globally maximizing} \ourmethod{} subject to resource constraints---a guarantee that black-box search (evolutionary, RL, random) over non-decomposable proxies fundamentally cannot provide.
    Among architectural scalars, only \ourmethod{} jointly tracks quality and decomposes additively: \#Params is additive but does not track quality, and existing training-free proxies track quality but are non-decomposable.

    \item \textbf{Empirical validation across ranking, design, and compression.}
    Across seven Transformer and CNN families, \ourmethod{} ranks architectures more accurately than \#Params, \#FLOPs, and training-free proxies in microseconds on CPU---on FlexiBERT, retaining $\tau{=}0.505$ on pairs within $10\%$ of \#Params, where \#Params drops to $0.082$ (\S\ref{sec:exp_proxy}).
    On Transformer-XL, \ouralgo{} beats the human-designed baseline in $2$ seconds, over $400{\times}$ faster than training-free proxies with heuristic search (\S\ref{sec:exp:search}).
    On LLaMA-7B pruning to $5.7$\,B, \ouralgo{} produces the best model across eight commonsense tasks \emph{without calibration data}, ${\sim}5900{\times}$ faster than the strongest baseline (\S\ref{sec:exp:llama}).
\end{itemize}

\section{Related work} \label{sec:related}

\paragraph{Random matrix theory in deep learning.}
Random matrix theory (RMT) has been applied to the spectra neural networks \emph{give rise to}: trained weight matrices~\citep{martin2021implicit}, activation Gram matrices~\citep{pennington2017nonlinear}, input--output Jacobians~\citep{pennington2017resurrecting}, loss Hessians~\citep{pennington2017geometry}, and neural tangent kernels~\citep{fan2020spectra}, alongside precise generalization asymptotics for high-dimensional linear models~\citep{advani2020high,mei2022generalization}; see \citet{couillet2022random} for a survey.
The three works closest to ours span the network lifecycle; Table~\ref{tab:rmt_lifecycle} in Appendix~\ref{app:rmt_table} organizes the comparison.
\citet{martin2021implicit} read heavy-tailed departures of trained-weight spectra from the Marchenko--Pastur (MP) law as a diagnostic of training quality; at initialization, where \ourmethod{} operates, that diagnostic is degenerate by construction.
\citet{pennington2017nonlinear} derive the limiting activation spectrum at initialization, with the nonlinearity and the input distribution both entering the answer; \ourmethod{} abstracts from both.
\citet{berlyand2023enhancing} prune singular values inside the MP bulk during training, reading them as residual initialization randomness; at initialization the entire spectrum lies in the bulk, so their criterion classifies as noise exactly the mass \ourmethod{} counts---noise relative to learned structure, budget relative to what a specification provides for training to use.
\ourmethod{} differs from all three in the role the MP law plays: not a null model, a noise criterion, or a baseline to extend, but a \emph{deterministic equivalent} that renders the score a well-defined function of the specification---the capacity of the MP spectrum itself is what \ourmethod{} counts (\S\ref{sec:closed_form}).
These works characterize the spectra an existing network gives rise to, in order to understand or repair it; \ourmethod{} computes a capacity from a specification, in order to choose among specifications, needing nothing beyond the specification's matrix shapes and initialization variances and turning the spectral quantity into an exactly optimized objective (\S\ref{sec:nsc_dp}).

\paragraph{Training-free proxies and architecture search.}
A line of work scores architectures at initialization to bypass training cost in NAS.
Early proxies originate from pruning at initialization---SNIP~\citep{lee2019snip}, SynFlow~\citep{tanaka2020pruning}---or from activation diversity (NASWOT~\citep{mellor2021naswot}).
Subsequent work targets Transformer structure directly: TF-TAS~\citep{zhou2022tftas}, W-PCA~\citep{wang2025wpca}, ZeroLM~\citep{chen2025zerolm}, AZ-NAS~\citep{lee2024aznas}, and softmax-confidence proxies established alongside the FlexiBERT and GPT-2 benchmarks~\citep{serianni2023nasbench}.
Despite the ``zero-cost'' label, these proxies are not cost-free---each instantiates a randomly-initialized network and measures activation patterns, gradient norms, or attention scores on sampled inputs.
They are typically paired with black-box search---evolutionary algorithms~\citep{real2019regularized}, reinforcement learning, or random sampling---which treats the proxy as a fitness function and returns only the best architecture \emph{visited}.
\ourmethod{} differs on both axes: it is closed-form and data-agnostic (no instantiation, no inputs, no gradients), and its layer-wise additivity admits an exact dynamic-programming solver (\ouralgo{}, \S\ref{sec:nsc_dp}) returning the architecture globally maximizing the proxy objective subject to resource constraints---converting black-box proxy sampling into structured optimization.

\paragraph{Structured pruning of large language models.}
A growing body of work compresses pretrained LLMs along structured axes under a deployment budget.
LLM-Pruner~\citep{ma2023llmpruner} prunes coupled structures via gradient-based importance on a small calibration set; Sheared LLaMA~\citep{xia2024sheared} jointly learns pruning masks and continues pretraining; LoNAS~\citep{munoz2024lonas} expresses pruning as supernet-based architecture selection.
Magnitude- and saliency-based one-shot pruners (SparseGPT~\citep{frantar2023sparsegpt}, Wanda~\citep{sun2024wanda}) require calibration activations to score individual weights.
\ourmethod{} positions in this landscape as a score function that operates from the architectural specification of the pruned subnetwork alone, requiring neither the pretrained weights nor calibration data; we apply it via the LoNAS supernet in \S\ref{sec:exp:llama}.

\section{Neural Spectral Capacity (\ourmethod{})} \label{sec:method}
Our approach builds on the connection between the singular-value spectrum of a weight matrix and the information capacity of the corresponding linear Gaussian channel~\citep{telatar1999capacity}.
We formalize this per-matrix quantity as \emph{spectral capacity} (\S\ref{sec:spectral_capacity}) and show that under standard random initialization it admits a closed-form expression in matrix dimensions and initialization variance alone (\S\ref{sec:closed_form}).
Aggregating per-matrix capacities---within layers, then across them---yields \ourmethod{} (\S\ref{sec:nsc_definition}), whose layer-wise additivity admits the exact dynamic-programming solver \ouralgo{} for resource-constrained architecture search (\S\ref{sec:nsc_dp}).

\subsection{Spectral capacity of a weight matrix} \label{sec:spectral_capacity}
We motivate our per-matrix score by treating each weight matrix as a communication channel and asking how much information can flow through it.
For a weight matrix $W \in \mathbb{R}^{m \times n}$ acting as a linear Gaussian channel $y = Wx + z$ with $x \sim \mathcal{N}(0, I_n)$ and $z \sim \mathcal{N}(0, I_m)$ independent, the mutual information admits a closed-form expression in terms of $W$ alone~\citep{telatar1999capacity}: 
\begin{equation}
\label{eq:mut_info}
I(x; y) = \tfrac{1}{2} \ln\det(I + W^{\!\top} W) = \tfrac{1}{2} \sum_{i} \ln(1 + \sigma_i^2), 
\end{equation}
where $\{\sigma_i\}$ are the singular values of $W$ (proof in Appendix~\ref{app:proof_mi}; we use $\ln$ throughout, measuring information in nats).
The decomposition into a sum over singular values follows from the SVD: in the basis of right singular vectors, the channel splits into $\min(m,n)$ \emph{parallel sub-channels}, the $i$-th contributing $\frac{1}{2}\ln(1 + \sigma_i^2)$.
We take the log-determinant itself as the per-matrix score, dropping the constant factor $\tfrac{1}{2}$: every downstream use of the score---ranking, additive aggregation, exact maximization---is invariant to positive scaling.
\begin{definition}[Spectral Capacity]
\label{def:psi}
\begin{equation}
    \psi(W) \;\coloneqq\; \ln\det\!\big(I + W^{\!\top} W\big) \;=\; \sum_{i=1}^{\min(m,n)} \ln(1 + \sigma_i^2).
    \label{eq:psi}
\end{equation}
\end{definition}
\noindent Zero singular values contribute zero, so $\psi$ remains well-defined for rank-deficient matrices.

\paragraph{What the channel model assumes---and what it does not.}
The Gaussian assumptions attach to the probe signal $x$ and the noise $z$, not to the network's data or weights.
The isotropic input $x \sim \mathcal{N}(0, I_n)$ places unit power in every input direction, so each parallel sub-channel above is probed at signal-to-noise ratio $\sigma_i^2$: this choice is what makes $\psi$ a function of the singular values alone, and is also the optimal input when no realization of $W$ is available~\citep{telatar1999capacity}---our specification-only setting.
The channel serves to define $\psi$; we do not claim that the network's forward pass realizes it.

\paragraph{$\psi$ depends on the full spectrum, not just the parameter count.}
A natural concern is whether $\psi$ is just a re-expression of model size.
It is not.
Two matrices with the same dimensions $(m, n)$---hence the same parameter count $mn$---can have arbitrarily different spectral capacities depending on their singular-value spectrum.
For instance, a full-rank $4096 \times 11\,008$ matrix from a LLaMA-7B FFN layer has $\psi \approx 3174$, while its rank-1 truncation has $\psi \approx 2.5$---a $1\,270\times$ drop with $mn$ unchanged.
\begin{wrapfigure}{r}{0.45\linewidth}
\centering
\vspace{-1.0em}
\includegraphics[width=\linewidth]{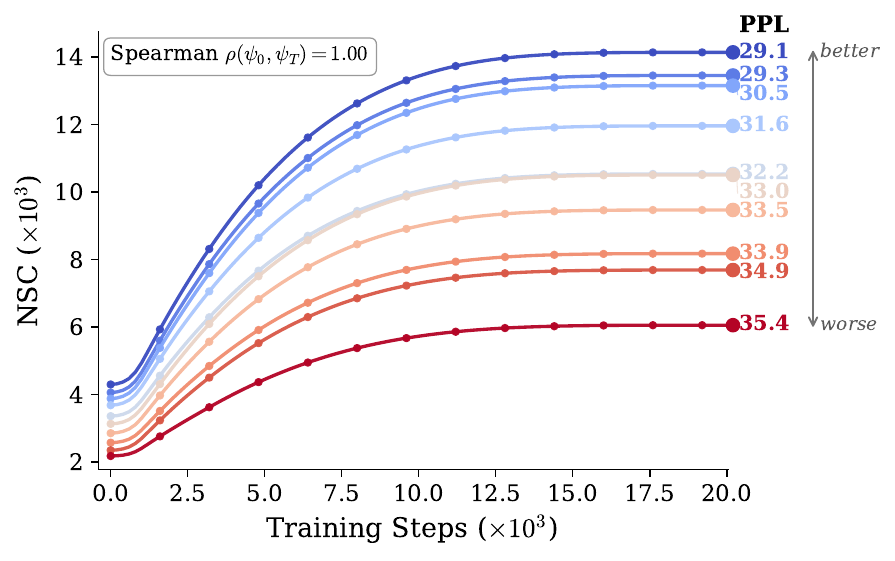}
\caption{Spectral capacity tracks learning. Curves show $\psi_t$ across training for 10 GPT-2 architectures on WikiText-103; color and right-axis label indicate final perplexity. Initialization-time ranking is preserved throughout training, and converged values correlate with PPL.}
\label{fig:motivation_psi}
\vspace{-2.5em}
\end{wrapfigure}
More generally, $\psi$ is sensitive to architectural changes that parameter count cannot see---spectral redistribution under low-rank adapters, pruning, or head merging, and aspect-ratio differences at fixed $mn$ (Appendix~\ref{app:nsc_vs_params}).

\paragraph{Why does $\psi_0$ predict trained performance?}
A second concern is that $\psi$ is computed from randomly initialized weights that training will overwrite.
Let $\psi_t$ denote $\psi$ summed over the network's weight matrices after $t$ training steps, with $t = T$ at convergence. Figure~\ref{fig:motivation_psi} shows that across training: (i)~$\psi_t$ increases monotonically as spectral structure develops; (ii)~rankings at $t=0$ are preserved throughout ($\rho(\psi_0, \psi_T) = 1.00$); (iii)~converged $\psi_T$ ranks final PPL perfectly ($|\rho| = 1.00$).
The perfect correlations reflect the small sample of 10 architectures with deliberate size spread; larger-scale results in §\ref{sec:experiments} report Spearman $\rho \in [0.78, 0.97]$.
A partial mechanistic explanation: $\partial \psi / \partial \sigma_i = 2\sigma_i/(1+\sigma_i^2)$ peaks at $\sigma_i = 1$ (Appendix~\ref{app:proof_gradient}), where variance-preserving initialization places the bulk of the spectrum~\citep{glorot2010understanding, he2015delving}.

\subsection{Spectral capacity in closed form} \label{sec:closed_form}

Definition~\ref{def:psi} requires the singular-value spectrum of $W$, which appears to demand instantiating the matrix and computing an SVD.
We now show that under standard random initialization, $\psi$ admits a deterministic expression depending only on the matrix's dimensions and initialization variance.

\paragraph{The Marchenko--Pastur form of $\psi$.}
By the Marchenko--Pastur theorem~\citep{marchenko1967distribution}, when $W \in \mathbb{R}^{m \times n}$ has i.i.d.\ entries with mean zero and variance $s^2$, the empirical distribution of the normalized squared singular values $\sigma_i^2 / (Ms^2)$ converges, as $\min(m,n) \to \infty$, to a deterministic law with density $f_{\mathrm{MP}}(\lambda; \gamma)$, where $M = \max(m,n)$, $N = \min(m,n)$, $\gamma = N/M$.
This convergence is universal: bounded fourth moment of the entries suffices, covering all standard initialization schemes (Xavier, Kaiming, truncated normal)~\citep{bai2010spectral}.
Integrating $\ln(1 + Ms^2\lambda)$ against this law yields the limiting capacity per sub-channel (proof in Appendix~\ref{app:proof_mp}):
\begin{theorem}[Marchenko--Pastur Spectral Capacity]
\label{thm:mp}
Let $W \in \mathbb{R}^{m \times n}$ have i.i.d.\ entries of mean zero and variance $s^2$, and let $\min(m, n) \to \infty$ with $\gamma$ and $Ms^2$ held fixed. Then
\begin{equation}
\label{eq:mp_limit}
\frac{\psi(W)}{N} \;\xrightarrow{\mathrm{a.s.}}\; \mathcal{I}(\gamma, Ms^2) \;\coloneqq\; \int_{\lambda_-}^{\lambda_+} \ln\!\big(1 + M s^2 \lambda\big) \, f_{\mathrm{MP}}(\lambda; \gamma) \, d\lambda,
\end{equation}
where $\lambda_\pm = (1 \pm \sqrt{\gamma})^2$.
\end{theorem}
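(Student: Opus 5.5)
The plan is to reduce the statement to weak convergence of the empirical spectral distribution (ESD) plus a uniform-integrability argument, since $\psi(W)/N$ is exactly a linear spectral statistic. Without loss of generality assume $m \ge n$ (otherwise replace $W$ by $W^{\!\top}$, which leaves the nonzero singular values and hence $\psi$ unchanged). Then $N=n$, $M=m$, and we set $\lambda_i \coloneqq \sigma_i^2/(Ms^2)$, the eigenvalues of the normalized Wishart-type matrix $S \coloneqq W^{\!\top}W/(Ms^2)$. Writing $c \coloneqq Ms^2$, which is held fixed, and $\mu_N \coloneqq \frac1N\sum_i \delta_{\lambda_i}$, we have
\begin{equation*}
\frac{\psi(W)}{N} = \int \ln(1+c\lambda)\, d\mu_N(\lambda).
\end{equation*}
The claim is therefore that this integral converges almost surely to $\int \ln(1+c\lambda) f_{\mathrm{MP}}(\lambda;\gamma)\,d\lambda$.

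\textbf{Step 1: ESD convergence.} Apply the Marchenko--Pastur theorem in its almost-sure form for i.i.d.\ entries with finite variance (as in \citet{bai2010spectral}). Normalize $X \coloneqq W/s$, whose entries have unit variance, so that $S = X^{\!\top}X/M$ with $N/M \to \gamma \in (0,1]$. This gives $\mu_N \Rightarrow \mu_{\mathrm{MP},\gamma}$ weakly, almost surely. The MP law has compact support $[\lambda_-,\lambda_+]$. Because $\gamma \le 1$ here, it has no atom at zero, so its density form matches the displayed integral.

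\textbf{Step 2: passing to the unbounded test function.} The function $g(\lambda)=\ln(1+c\lambda)$ is continuous on $[0,\infty)$ but unbounded, so weak convergence alone does not give the limit. I would handle this with a truncation. Split $g = g\wedge K + (g-K)_+$ with $K > g(\lambda_+ + 1)$. The bounded continuous part converges by Step 1. For the tail, I see two options.

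\emph{Option (a):} invoke the Bai--Yin theorem, which under a finite fourth moment gives $\lambda_{\max}(S)\to\lambda_+$ almost surely. The tail term is then eventually zero.

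\emph{Option (b), which avoids the fourth moment:} use the bound $(g-K)_+ \le g\,\mathbf{1}\{\lambda > L\} \le c\lambda\,\mathbf{1}\{\lambda > L\}$, where $L$ is the point with $g(L)=K$. Then control $\int \lambda\,\mathbf{1}\{\lambda>L\}\,d\mu_N$ via the first moment. We have $\int\lambda\,d\mu_N = \operatorname{tr}S/N = \frac{1}{MN}\sum_{ij}X_{ij}^2 \to 1$ almost surely by the strong law of large numbers, and the first moment of the MP law is also $1$. Convergence of first moments together with weak convergence yields uniform integrability of $\lambda$ under $\mu_N$, so the tail contribution vanishes as $L\to\infty$, uniformly in $N$ along almost every sample path.

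I will present option (a), since the paper already assumes bounded fourth moments, and mention (b) as the variance-only route.

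\textbf{Step 3: assembling the limit.} Let $N\to\infty$ and then $K\to\infty$. Monotone convergence on the limit side identifies the answer as $\mathcal I(\gamma,c)$, which is finite because $f_{\mathrm{MP}}$ is compactly supported. A side remark: if $\gamma$ is only the limit of $N/M$ rather than exactly fixed, continuity of $\mathcal{I}$ in $\gamma$ covers this case.

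The main obstacle is Step 2, the unbounded test function. Everything else is a direct citation. I expect the cleanest route there is the trace/first-moment uniform-integrability argument, because it needs only the strong law of large numbers and the MP mean. The one delicacy is making the "almost surely, uniformly in $N$" quantifiers precise. I would do this by intersecting countably many full-probability events, one for each rational truncation level.
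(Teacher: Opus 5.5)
Your proposal is correct, and its Option (a) is essentially the paper's proof. The paper also uses almost-sure weak convergence of the spectral distribution of $W^{\top}W/(Ms^2)$ to the Marchenko--Pastur law, plus almost-sure convergence of the extreme eigenvalues to the support edges, so the spectrum eventually sits in a fixed compact set where $\ln(1+Ms^2\lambda)$ is bounded and continuous. Keep your Option (b) as well: the trace/first-moment uniform-integrability argument needs only the finite variance the theorem actually assumes, whereas the edge-convergence step in both your Option (a) and the paper's proof tacitly requires a finite fourth moment of the normalized entries.
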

$\mathcal{I}(\gamma, Ms^2)$ is the \emph{Shannon transform} of the Marchenko--Pastur law at signal-to-noise ratio $Ms^2$, a standard random-matrix quantity with an explicit closed form~\citep{tulino2004random}.
We henceforth define
\begin{equation}
\label{eq:psi_mp}
\psi(W) \;\coloneqq\; \psi_{\mathrm{MP}}(m, n, s) \;\coloneqq\; N\, \mathcal{I}\big(\gamma,\, M s^2\big).
\end{equation}
The limit functional of Theorem~\ref{thm:mp}, evaluated at the matrix's own aspect ratio and signal-to-noise parameter, is the definition rather than a finite-sample approximation of an SVD-based quantity.
$\psi$ thereby becomes a deterministic function of $(m, n, s)$---of the matrix's place in the architecture specification---with no dependence on a particular weight realization.
At practical dimensions the limit is tight: the relative error of $\psi_{\mathrm{MP}}$ against the SVD-based value $\psi_W$ of Definition~\ref{def:psi} decays as $O(1/\min(m,n))$, falling below $0.4\%$ at $\min(m,n) = 128$---the smallest hidden dimension across our benchmarks---and to ${\sim}10^{-4}$ by $\min(m,n) = 1024$ (Appendix~\ref{app:mp_convergence}).
On the LoNAS-LLaMA-7B trained supernet, $\psi_{\mathrm{MP}}$ matches $\psi_W$ at $\tau{=}\rho{=}1.0000$ across all $129$ Pareto-optimal subnets, with $0.039\%$ mean SVD-oracle regret and a $720{\times}$ compute saving over the SVD route (Appendix~\ref{app:mp_vs_svd}).

\paragraph{Robustness to initialization choice.}
Comparing architectures via Eq.~\ref{eq:psi_mp} requires a single initialization protocol fixed across all candidates, so that score differences reflect architecture rather than initialization.
Under a fixed scheme, $s$ is itself determined by the matrix's shape and role---Xavier sets $s^2 = 2/(m+n)$, Kaiming fan-in $s^2 = 2/n$, truncated normal a constant---so $\psi_{\mathrm{MP}}$ effectively reduces to a function of the dimensions alone.
The choice of fixed scheme broadly preserves the resulting ranking (Appendix~\ref{app:init_robustness}).

\subsection{From single matrices to layers and networks}
\label{sec:nsc_definition}

The spectral capacity $\psi$ scores one weight matrix.
To score an architecture, we fix which matrices are considered, then aggregate---first within a layer, then across layers.

\paragraph{Which matrices are considered.}
We consider the network's linear-projection weight matrices: per-head attention projections, the attention output projection, and FFN matrices.
For multi-head attention with $H$ heads and width $d$, we consider each head's Q/K/V projections separately rather than treating $W_{QKV} \in \mathbb{R}^{3d \times d}$ as one matrix; this mirrors the forward pass (each head computes attention in its own $d_h = d/H$ subspace) and is necessary to discriminate architectures differing in head count.
Convolutions are reshaped from $W \in \mathbb{R}^{c_{\text{out}} \times c_{\text{in}} \times k \times k}$ to $\widetilde{W} \in \mathbb{R}^{c_{\text{out}} \times c_{\text{in}} k^2}$---the form in which the convolution acts as a linear map on im2col patches---before applying $\psi$.
Embeddings (dictated by vocabulary and sequence length), biases, normalization, and nonlinearities are excluded.

\paragraph{From matrices to a layer.}
\begin{wraptable}{r}{0.54\textwidth}
\centering
\footnotesize
\vspace{-1.5em}
\caption{\textbf{Across-layer aggregation ablation.} Spearman $\rho$ against trained performance (GLUE, perplexity, ImageNet Top-1); number of distinct scores in parentheses ($500$/$200$/$1{,}001$ architectures).}
\label{tab:agg_ablation}
\setlength{\tabcolsep}{4pt}
\renewcommand{\arraystretch}{0.95}
\begin{tabular}{l c c c}
\toprule
\textbf{Aggregation} & \textbf{FlexiBERT} & \textbf{GPT-2} & \textbf{AutoFormer-T} \\
\midrule
\rowcolor{rowhl}
$\sum_l \Psi_l$ & 0.884 (297) & 0.968 (200) & 0.810 (208) \\
$\min_l \Psi_l$ & 0.178 (22) & 0.813 (195) & $-0.173$ (3) \\
$\prod_l \Psi_l$ & 0.917 (362) & 0.712 (200) & 0.816 (458) \\
\bottomrule
\end{tabular}
\vspace{-1.2em}
\end{wraptable}
Within a layer, we aggregate by summation: the capacity of layer $l$ is $\Psi_l \coloneqq \sum_{j=1}^{J_l} \psi(W_{l,j})$.
The sum is itself a spectral capacity: because block-diagonal composition adds log-determinants, $\Psi_l = \psi\big(\mathrm{diag}(W_{l,1}, \ldots, W_{l,J_l})\big)$---the capacity of the layer's matrices composed \emph{in parallel}.
Summation is thus the parallel half of the composition rule for channel capacities~\citep{foggo2023on}, extending the parallel sub-channels of \S\ref{sec:spectral_capacity} from directions within one matrix to matrices within one layer.
For the head dimension this parallelism is literal: the heads operate side by side in disjoint subspaces.

\paragraph{From layers to a network.}
Summing layer capacities gives the network-level score:
\begin{definition}[Neural Spectral Capacity]
\label{def:nsc}
For a network $f$ with $L$ layers:
\begin{equation}
\label{eq:nsc}
    \ourmethod(f) \;=\; \sum_{l=1}^{L} \Psi_l \;=\; \sum_{l=1}^{L} \sum_{j=1}^{J_l} \psi_{\mathrm{MP}}(m_{l,j},\, n_{l,j},\, s_{l,j}).
\end{equation}
\end{definition}
The second equality uses the closed form of \S\ref{sec:closed_form}, making \ourmethod{} evaluable in $O(L)$ from the specification alone.
Across layers, parallelism has no literal counterpart: layers execute in sequence, as do the attention and FFN blocks within each layer.
Summing across them is therefore an empirically motivated modelling choice, not a claim about the end-to-end mutual information of the forward pass.
Table~\ref{tab:agg_ablation} supports this choice: summation ranks best on average and no alternative beats it on more than one benchmark, while the bottleneck (min) rule collapses---assigning the $500$ FlexiBERT architectures only $22$ distinct scores and turning anti-correlated on AutoFormer-Tiny; Appendix~\ref{app:aggregation_choice} gives the full protocol.
\begin{algorithm}[t]
\caption{\ouralgo{}: \ourmethod{}-guided Dynamic Programming}
\label{alg:nsc_dp}
\begin{algorithmic}[1]
\REQUIRE Network-level set $\mathcal{G}$; layer-level sets $\{\mathcal{X}_l(g)\}$; resource budget $B$
\ENSURE Architecture $a^*$ maximizing \ourmethod{} under budget $B$
\STATE $\mathrm{best} \leftarrow -\infty$; \; $a^* \leftarrow \textsc{None}$
\FOR{each network-level configuration $g \in \mathcal{G}$}
    \STATE $L \leftarrow L(g)$ \hfill \COMMENT{number of layers determined by $g$}
    \STATE Precompute per-layer capacities $\Psi_l(\cdot\,; g)$ and costs $c_l(\cdot\,; g)$ \hfill \COMMENT{cached $\psi_{\mathrm{MP}}$}
    \STATE Solve the layer-level knapsack via dynamic programming:
    \[
        \hat{\Psi}(g) \;\leftarrow\; \max_{\{x_l \in \mathcal{X}_l(g)\}_{l=1}^{L}} \sum_{l=1}^{L} \Psi_l(x_l; g)
        \;\;\text{s.t.}\;\; \sum_{l=1}^{L} c_l(x_l; g) \leq B,
        \quad \text{with maximizer } \{x_l^*(g)\}_{l=1}^{L}
    \]
    \IF{$\hat{\Psi}(g) > \mathrm{best}$}
        \STATE $a^* \leftarrow (g,\, x_1^*(g), \ldots, x_L^*(g))$; \; $\mathrm{best} \leftarrow \hat{\Psi}(g)$
    \ENDIF
\ENDFOR
\RETURN $a^*$
\end{algorithmic}
\end{algorithm}

\subsection{Architecture design via \ouralgo{}} \label{sec:nsc_dp}
Coupling \ourmethod{} with structured optimization---enabled by its layer-wise additive form---yields \textbf{\ourmethod{}-guided Dynamic Programming (\ouralgo{}, Algorithm~\ref{alg:nsc_dp})}, a solver that returns the architecture globally maximizing \ourmethod{} under a resource budget.
The construction exploits a two-level decomposition.
A Transformer architecture decomposes as $a = (g, x_1, \ldots, x_{L(g)})$, where $g \in \mathcal{G}$ denotes \emph{network-level} decisions (depth, stage layout, embedding size, etc.) and $x_l \in \mathcal{X}_l(g)$ denotes \emph{layer-level} decisions at layer $l$ (heads, FFN width, MLP ratio, etc.).
The network-level $g$ both fixes the number of layers $L(g)$ and enters the layer capacities (e.g., via embedding size).
Conditioned on $g$, \ourmethod{} separates across layers: $\ourmethod{}(a) = \sum_{l=1}^{L(g)} \Psi_l(x_l; g)$, each layer's capacity depending only on its own decisions $x_l$.
Standard deployment constraints admit the same per-layer structure: $R(a) = \sum_{l=1}^{L(g)} c_l(x_l; g) \le B$, where $R$ can represent parameter count, FLOPs, or any resource that decomposes as a sum of per-layer costs.
Once $g$ is fixed, the layer-level problem is therefore a knapsack---bounded or multiple-choice, depending on the search space---solvable exactly by dynamic programming.
Algorithm~\ref{alg:nsc_dp} enumerates $g \in \mathcal{G}$ and solves this inner knapsack for each $g$.
This exact-optimization structure gives \ouralgo{} four properties:
\begin{itemize}[leftmargin=1.5em, itemsep=0.3em, topsep=0.3em]
    \item \textbf{Global optimality.} Because the outer loop exhausts $\mathcal{G}$ and the inner DP solves the layer-level knapsack exactly, \ouralgo{} returns the architecture that \emph{globally maximizes \ourmethod{}} subject to the resource budget---global with respect to the proxy objective, not the underlying architecture design problem. By contrast, existing training-free proxies serve as black-box fitness functions and return only the best architecture visited by heuristic search.

    \item \textbf{Multi-budget search.} The DP table additionally encodes the proxy-optimum at every reachable budget $b \le B$, so a single \ouralgo{} run yields the entire Pareto front over the discretized budget grid as $\mathcal{O}(B)$ table look-ups---a property absent from training-free proxy baselines, which require a separate single-objective search per budget (Appendix~\ref{app:multiobj}).

    \item \textbf{Efficiency.} \ourmethod{} is closed-form and admits caching: each $\psi_{\mathrm{MP}}$ evaluation is a 1D numerical quadrature costing ${\sim}10\,\mu\text{s}$, reused across network-level configurations whenever the $(m, n, s)$ tuple repeats. The inner DP runs in $\mathcal{O}(L(g) \cdot B \cdot |\mathcal{X}_l(g)|)$ on a discretized budget axis---negligible compared to a single forward pass. Combined, \ouralgo{} searches spaces of up to $10^{32}$ architectures in seconds on a single CPU core, with no GPU, no data, and no model instantiation.

    \item \textbf{Generality.} \ouralgo{} applies to any search space where (i)~the \ourmethod{} score decomposes as a sum of independent per-layer contributions, and (ii)~the resource constraint is a sum of per-layer costs. These conditions hold broadly: in AutoFormer, the layer-level variables are MLP ratio and head count; in LoNAS-LLaMA, FFN width and LoRA rank.
\end{itemize}

\section{Experiments} \label{sec:experiments}
We evaluate \ourmethod{} in three settings: (i)~ranking quality across seven Transformer and CNN architecture families (\S\ref{sec:exp_proxy}), (ii)~end-to-end neural architecture search on Transformer-XL and AutoFormer (\S\ref{sec:exp:search}), and (iii)~structured pruning of LLaMA-7B (\S\ref{sec:exp:llama}).
Table~\ref{tab:search-spaces} summarizes the architecture families used across the three settings.

\begin{table}[ht]
\centering
\small
\caption{Architecture families and decision dimensions evaluated in our experiments. \emph{Decision Dimensions} lists the architectural axes that vary across candidate networks within each family; \emph{\#Archs.} denotes the (estimated) total number of distinct architectures.}
\label{tab:search-spaces}
\setlength{\tabcolsep}{4pt}
\begin{tabularx}{\linewidth}{@{}c l X r@{}}
\toprule
 & \textbf{Architecture} & \textbf{Decision Dimensions} & \textbf{\#Archs.} \\
\midrule
\multirow{5}{*}{\rotatebox{90}{\textit{Transformers}}}
 & BERT~\citep{serianni2023nasbench}
    & Hidden dim, depth, per-layer FFN dim \& attn.\ type
    & $1.1 \times 10^{7}$ \\
 & GPT-2~\citep{javaheripi2022litetransformersearch}
    & Embed dim, depth, per-layer FFN dim \& heads
    & ${>}\,10^{54}$ \\
 & Transformer-XL~\citep{javaheripi2022litetransformersearch}
    & Embed dim, depth \& per-layer FFN dim
    & $2 \times 10^{25}$ \\
 & ViT (AutoFormer)~\citep{chen2021autoformer}
    & Embed dim, depth, per-layer heads \& MLP ratio
    & $1.7 \times 10^{16}$ \\
 & LLaMA (LoNAS)~\citep{munoz2024lonas}
    & Per-layer FFN dim \& LoRA rank (32 layers)
    & $1.0 \times 10^{32}$ \\
\midrule
\multirow{2}{*}{\rotatebox{90}{\textit{CNNs}}}
 & CNN (NATS-Bench)~\citep{dong2021nats}
    & Per-layer channels (5 layers)
    & 32{,}768 \\
 & MobileNetV3 (OFA)~\citep{cai2020once}
    & Per-stage depth, per-layer channels \& kernel size
    & $2.0 \times 10^{19}$ \\
\bottomrule
\end{tabularx}
\end{table}

\subsection{Ranking quality across architecture families via \ourmethod{}}
\label{sec:exp_proxy}

Before applying \ourmethod{} to architecture search (\S\ref{sec:exp:search}) and LLM pruning (\S\ref{sec:exp:llama}), we first establish that it tracks trained performance across architecture families.
This subsection asks: does \ourmethod{}, computed from the architectural specification alone, rank candidate architectures more accurately than \#Params, \#FLOPs, and representative training-free proxies that require model instantiation, forward or backward passes, and sampled inputs?

\begin{figure}[t]
    \centering
    \begin{minipage}[c]{0.42\textwidth}
        \centering
        \begin{subfigure}{\textwidth}
            \centering
            \includegraphics[width=\textwidth]{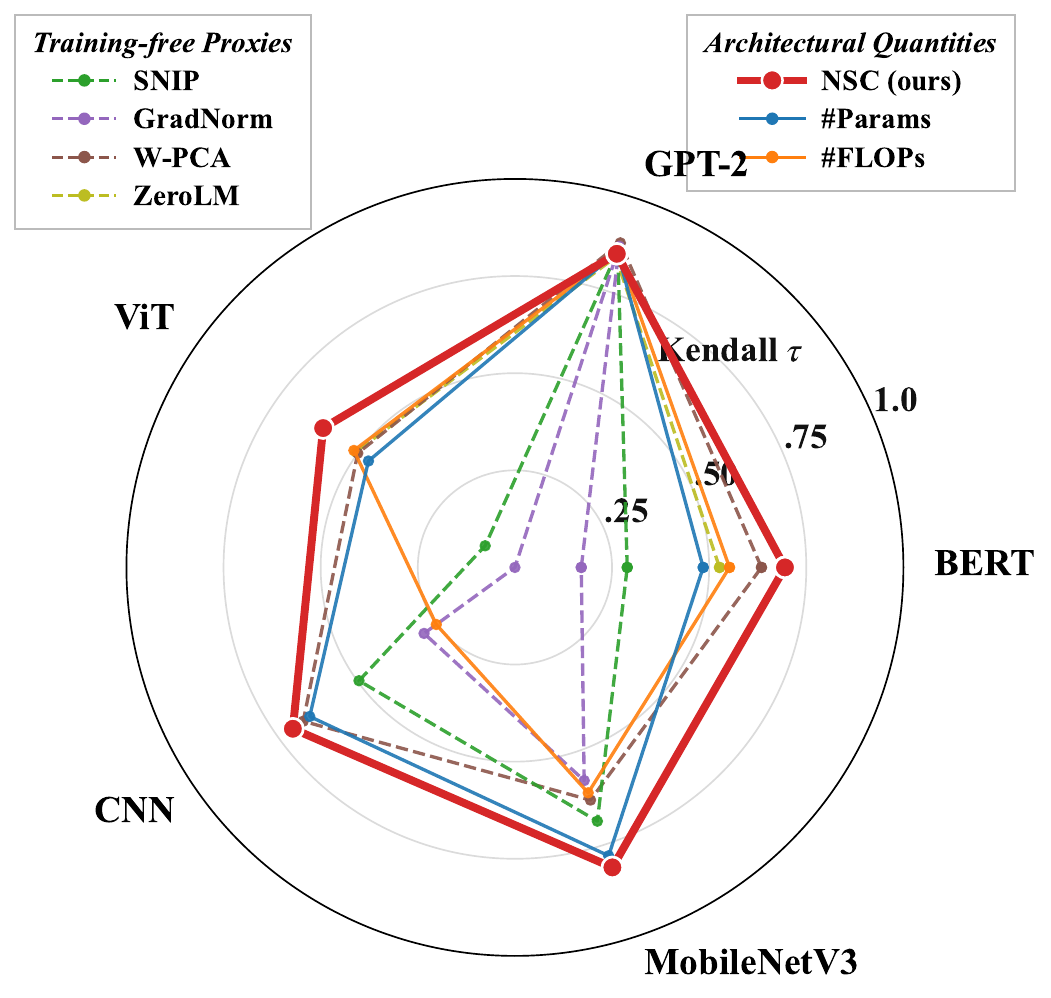}
            \label{fig:radar_overall}
        \end{subfigure}
    \end{minipage}
    \hfill
    \begin{minipage}[c]{0.56\textwidth}
        \centering
        \begin{subfigure}{0.31\textwidth}
            \includegraphics[width=\textwidth]{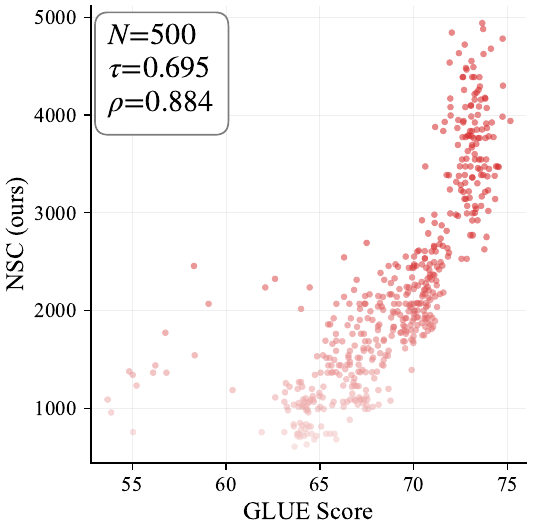}
            \caption{\ourmethod{}}
        \end{subfigure}
        \hfill
        \begin{subfigure}{0.31\textwidth}
            \includegraphics[width=\textwidth]{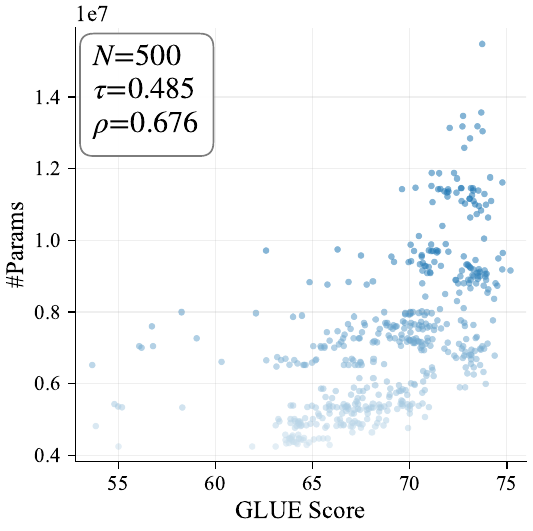}
            \caption{\#Params}
        \end{subfigure}
        \hfill
        \begin{subfigure}{0.31\textwidth}
            \includegraphics[width=\textwidth]{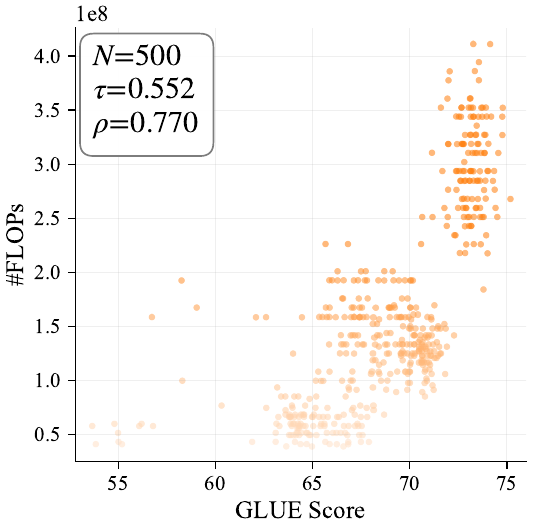}
            \caption{\#FLOPs}
        \end{subfigure}

        \vspace{1mm}
        \begin{subfigure}{0.31\textwidth}
            \includegraphics[width=\textwidth]{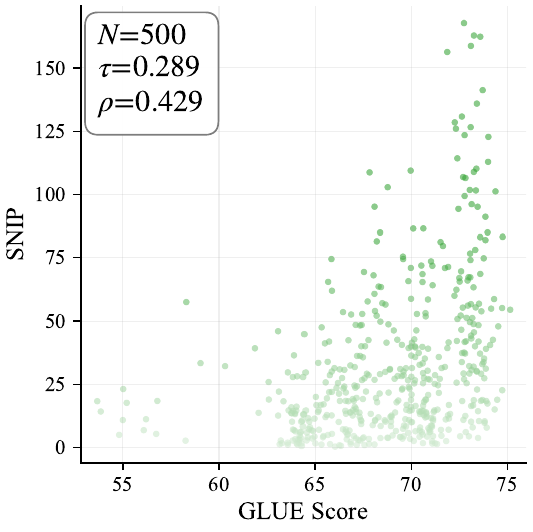}
            \caption{SNIP}
        \end{subfigure}
        \hfill
        \begin{subfigure}{0.31\textwidth}
            \includegraphics[width=\textwidth]{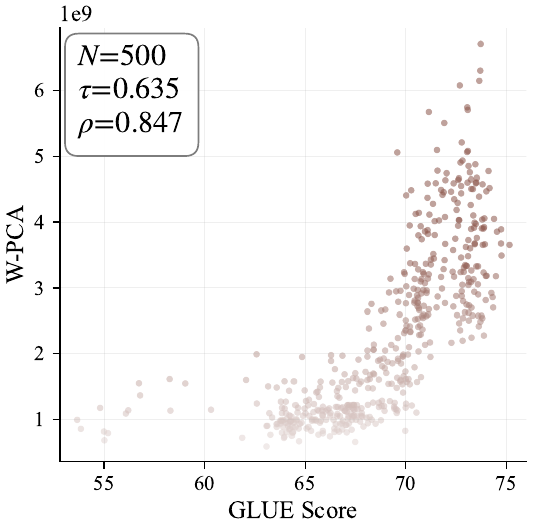}
            \caption{W-PCA}
        \end{subfigure}
        \hfill
        \begin{subfigure}{0.31\textwidth}
            \includegraphics[width=\textwidth]{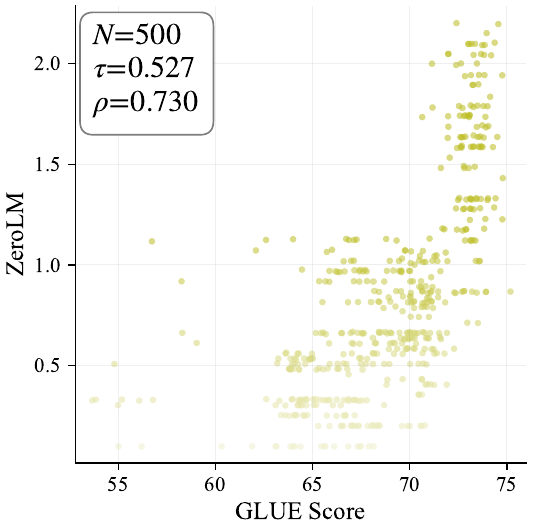}
            \caption{ZeroLM}
        \end{subfigure}
    \end{minipage}

    \caption{Ranking quality of \ourmethod{} against architectural quantities (\#Params, \#FLOPs) and training-free proxies across five architecture families. \textit{Left:} Kendall $\tau$ on FlexiBERT, GPT-2, AutoFormer-Tiny, NATS-Bench-SSS, and MobileNetV3. \textit{Right:} Scatter of six scoring functions against GLUE score on 500 BERT architectures from FlexiBERT, with Kendall $\tau$ and Spearman $\rho$ annotated.}
    \label{fig:radar_bert_scatter}
\end{figure}

\paragraph{Setup.}
We evaluate \ourmethod{} on three Transformer architecture families: 500 BERT architectures from FlexiBERT~\citep{tuli2022flexibert,serianni2023nasbench}, 200 GPT-2 architectures from LiteTransformerSearch~\citep{javaheripi2022litetransformersearch}, and 1{,}001 ViT architectures from the AutoFormer-Tiny supernet~\citep{chen2021autoformer}.
For FlexiBERT, we use the full decision space (dynamic hidden size and FFN), differing from the constrained setting in \citet{serianni2023nasbench} (fixed hidden size and FFN); results under the original setting are reported in Appendix~\ref{app:realdata}.
To assess whether \ourmethod{} generalizes beyond Transformers, we additionally include two CNN architecture families: CNN architectures from NATS-Bench-SSS~\citep{dong2021nats} and MobileNetV3 architectures from the OFA supernet~\citep{cai2020once}.
We compare \ourmethod{} against the two existing architectural quantities (\#Params, \#FLOPs) and four representative training-free proxies (W-PCA~\citep{wang2025wpca}, ZeroLM~\citep{chen2025zerolm}, SNIP~\citep{lee2019snip}, GradNorm~\citep{abdelfattah2021zerocost}); benchmark-specific additional baselines are evaluated in Appendix~\ref{app:realdata}.
Wall-clock times are measured on a single A100 GPU for training-free proxies and a single CPU core for architectural quantities including \ourmethod{}.

\paragraph{\ourmethod{} captures architectural quality beyond size and compute.}
On FlexiBERT, \ourmethod{} reaches $\tau=0.695$ versus $0.485$ for \#Params and $0.552$ for \#FLOPs (\autoref{tab:flexibert-correlation}); under the 10\%-PW control---
\begin{wraptable}{r}{0.42\textwidth}
    \centering
\footnotesize
\vspace{-.2em}
\caption{%
  \textbf{Ranking correlation on FlexiBERT.}
  Methods grouped as in \autoref{fig:radar_bert_scatter}.
  \emph{10\%-PW}: $\tau$ on architecture pairs differing in $\#$Params by ${<}10\%$.
  \emph{Time}: wall-clock for all 500 architectures, on CPU for architectural quantities (top) and on a single A100 GPU for training-free proxies (bottom).
  \textbf{Bold}: best; \underline{underline}: second best.%
}
\label{tab:flexibert-correlation}
\setlength{\tabcolsep}{4pt}
\renewcommand{\arraystretch}{0.95}
\begin{tabular}{l c c r}
\toprule
\textbf{Method} & $\tau$ & \textbf{10\%-PW} $\tau$ & \textbf{Time} \\
\midrule
\rowcolor{rowhl}
\textbf{\ourmethod{} (ours)} & \textbf{0.695} & \textbf{0.505} & \textbf{2\,ms} \\
\#Params               & 0.485          & 0.082          & ---            \\
\#FLOPs                & 0.552          & 0.329            & 2\,ms          \\
\midrule
W-PCA                  & \underline{0.635} & \underline{0.417} & 88\,s \\
ZeroLM                 & 0.527             & 0.355             & 63\,s \\
SNIP                   & 0.289             & 0.237             & 73\,s \\
GradNorm               & 0.171             & 0.164             & 74\,s \\
\bottomrule
\end{tabular}
\vspace{-1em}
\end{wraptable}
which restricts $\tau$ to architecture pairs differing in \#Params by less than $10\%$, removing size as a discriminating signal---\#Params drops to $\tau=0.082$ while \ourmethod{} retains $\tau=0.505$, suggesting that \ourmethod{} captures architectural information beyond parameter count.
This pattern generalizes: \ourmethod{}'s 10\%-PW $\tau$ exceeds \#Params' on every benchmark---e.g., $0.503$ vs.\ $0.322$ on AutoFormer-Tiny, $0.372$ vs.\ $0.116$ on NATS-Bench-SSS, $0.535$ vs.\ $0.414$ on MobileNetV3 (Appendix~\ref{app:controlled-correlation}).
\ourmethod{} additionally outperforms training-free proxies that require model instantiation, forward or backward passes on sampled inputs, or per-dataset proxy search, despite being computed from the architecture specification alone (\autoref{fig:radar_bert_scatter}; full per-family results in Appendix~\ref{app:realdata}).
On FlexiBERT it leads W-PCA by $+0.060$ and ZeroLM by $+0.168$ in $\tau$; the lead widens on AutoFormer-Tiny ($+0.110$ over W-PCA), while on GPT-2 \ourmethod{} is competitive with the leading proxies (within $0.03$ of W-PCA) at orders-of-magnitude lower wall-clock.
\ourmethod{} also top-ranks both CNN architecture families (NATS-Bench-SSS, MobileNetV3), despite being designed for Transformers.
All of these results are obtained at microsecond cost per architecture (\autoref{tab:flexibert-correlation}, \emph{Time} column), four orders of magnitude faster than the closest training-free competitors.

\subsection{Architecture search via \ouralgo{}}
\label{sec:exp:search}

The ranking results above establish that \ourmethod{} tracks trained performance broadly.
We now ask whether \emph{optimizing} \ourmethod{} produces good architectures---first in the classical NAS setting (this section), then via structured pruning of a pretrained LLM (\S\ref{sec:exp:llama}).

\paragraph{Setup.}
We instantiate \ouralgo{} on two Transformer families: Transformer-XL (TXL)~\citep{dai2019transformer} from LiteTransformerSearch~\citep{javaheripi2022litetransformersearch}, and vision Transformers from AutoFormer~\citep{chen2021autoformer} (Table~\ref{tab:search-spaces}).
For each benchmark, all baselines and \ouralgo{} are evaluated under an identical protocol: for TXL, every selected architecture is trained from scratch on WikiText-103 under the same hyperparameters; for AutoFormer-Tiny, every selected architecture inherits weights from the AutoFormer supernet.
The TXL search uses a non-embedding parameter budget of $38.4$\,M $\pm 5\%$ matching TXL Base; AutoFormer-Tiny uses a parameter budget of $5.7$\,M.

\paragraph{\ouralgo{} instantiation and baselines.}
For Transformer-XL, we instantiate Algorithm~\ref{alg:nsc_dp} with network-level decisions $g = (d_{\mathrm{model}}, L)$ and layer-level decisions $x_l = d_{\mathrm{ff}, l}$; the inner problem is a bounded knapsack with additive value $\Psi_l = 2\,\psi_{\mathrm{MP}}(d_{\mathrm{ff}, l}, d_{\mathrm{model}}, s)$ (the FFN pair; attention terms are fixed by $g$) and additive cost $c_l = d_{\mathrm{ff}, l}(2 d_{\mathrm{model}}{+}1)$, with the resource budget $B$ matching the parameter range above. For AutoFormer-Tiny, the network-level decisions are network depth and embedding dimension, with layer-level decisions $x_l = (\text{MLP ratio}_l, \text{heads}_l)$; the inner problem is a multiple-choice knapsack solved analogously by dynamic programming. Baselines reflect each benchmark's prior literature: for Transformer-XL, we compare against the original Transformer-XL base and three training-free proxies---Synaptic Diversity~\citep{zhou2022tftas}, Softmax Confidence~\citep{serianni2023nasbench}, and W-PCA~\citep{wang2025wpca}---each used as fitness for evolutionary search; for AutoFormer-Tiny we compare against the AutoFormer-Tiny oracle~\citep{chen2021autoformer}, TF-TAS~\citep{zhou2022tftas}, AZ-NAS~\citep{lee2024aznas}, and W-PCA proxy used as fitness for evolutionary search. All compared baselines run on GPU while \ouralgo{} runs on a single CPU core.

\paragraph{Results.}
Table~\ref{tab:nas-results} reports both experiments.
On Transformer-XL, \ouralgo{} discovers an architecture with test PPL $23.087$, the lowest among all methods---improving over Transformer-XL Base ($23.279$) and Synaptic Div. ($23.135$). The search completes in $2.0$ seconds on CPU, more than $400\times$ faster than the $883$--$965$\,s consumed by training-free proxy baselines.
On AutoFormer-Tiny, \ouralgo{} matches the day-scale AutoFormer oracle within $0.03$ accuracy points while reducing search cost by six orders of magnitude, and outperforms all heuristics-based alternatives on Acc@1.
Results on AutoFormer-Small/Base are reported in Appendix~\ref{app:autoformer-search} and follow the same pattern; Appendix~\ref{app:discovered-arch} analyzes structural properties of \ouralgo{}'s discovered architectures, including a proof that uniform per-layer FFN allocation is the proxy-optimum on Transformer-XL.

\begin{table}[ht]
\centering
\caption{End-to-end architecture search by \ouralgo{} on language (\textit{left}: Transformer-XL on WikiText-103) and vision (\textit{right}: AutoFormer-Tiny on ImageNet-1K) Transformers. \#P: non-embedding parameters in millions. Search-cost: s = seconds, d = GPU-days. \textbf{Bold}: best; \underline{underline}: second best.}
\label{tab:nas-results}
\setlength{\tabcolsep}{4pt}
\footnotesize

\begin{minipage}[t]{0.41\linewidth}
\centering
\begin{tabular}{l c c c}
\toprule
\textbf{Method} & \textbf{\#P (M)} & \textbf{Test PPL} ($\downarrow$) & \textbf{Cost} \\
\midrule
TXL Base            & 38 & 23.279 & --- \\
Synaptic Div.       & 37 & \underline{23.135} & $903$\,s \\
Softmax Conf.       & 39 & 23.532 & $883$\,s \\
W-PCA               & 40 & 23.669 & $965$\,s \\
\midrule
\rowcolor{rowhl}
\textbf{\ouralgo{} (ours)} & {40} & \textbf{23.087} & $\mathbf{2.0}$\,\textbf{s} \\
\bottomrule
\end{tabular}
\end{minipage}%
\hfill
\begin{minipage}[t]{0.56\linewidth}
\centering
\begin{tabular}{l c c c c}
\toprule
\textbf{Method} & \textbf{\#P (M)} & \textbf{Acc@1} ($\uparrow$) & \textbf{Acc@5} ($\uparrow$) & \textbf{Cost} \\
\midrule
AutoFormer-T     & 5.7 & \textbf{75.308} & 92.690          & $24$\,d \\
TF-TAS              & 5.9 & 75.234          & \underline{92.730} & $0.5$\,d \\
AZ-NAS              & 5.9 & 74.804          & 92.504          & $2{,}592$\,s \\
W-PCA               & 5.8 & 74.752          & 92.500          & $206$\,s \\
\midrule
\rowcolor{rowhl}
\textbf{\ouralgo{} (ours)} & {5.8} & \underline{75.276} & \textbf{92.788} & $\mathbf{0.03}$\,\textbf{s} \\
\bottomrule
\end{tabular}
\end{minipage}

\end{table}

\subsection{Structured pruning of LLaMA-7B via \ouralgo{}}
\label{sec:exp:llama}

We now apply \ouralgo{} to LLM compression: structured pruning of a pretrained LLM under a deployment budget.
This setting is qualitatively distinct from the end-to-end NAS of \S\ref{sec:exp:search}: rather than designing a network from scratch, we compress a pretrained LLM by selecting which components to retain.
The training-free, calibration-data-free property of \ourmethod{} is materially differentiating here---existing structured-pruning methods couple the pruning decision to the pretrained weights and a calibration mini-batch, while \ourmethod{} scores pruning configurations from the specification of the pruned subnetwork alone.

\paragraph{Setup.}
We evaluate \ouralgo{} on structured pruning of LLaMA-7B via the LoNAS SuperNet of \citet{munoz2024lonas}, with parameter budget 5.7\,B.
The SuperNet exposes two elastic axes per Transformer block: per-block LoRA rank $r_\ell \in \{32, 28\}$ applied to attention and FFN projections, and per-block FFN intermediate dimension $h_\ell \in \{11008, 9632, 8256, 6880, 5504\}$.

\paragraph{\ouralgo{} instantiation and baselines.}
The pruning decision space contains only layer-level decisions: each block selects a tuple $x_\ell = (r_\ell, h_\ell)$ of LoRA rank and FFN intermediate dimension.
With network-level depth and embedding dimension fixed by the LLaMA-7B architecture, \ouralgo{} reduces to a single multiple-choice knapsack solved by exact dynamic programming over the closed-form $\psi_{\mathrm{MP}}$ cache.
We compare against four training-free proxies---SNIP~\citep{lee2019snip}, GradNorm~\citep{abdelfattah2021zerocost}, SynFlow~\citep{tanaka2020pruning}, and W-PCA~\citep{wang2025wpca}---each used as the fitness function for an evolutionary search under the same 5.7\,B budget (population size $50$ for $20$ generations, yielding $1{,}000$ proxy evaluations total per method).
Each proxy is computed on a calibration mini-batch from WikiText-103 ($B{=}2$ tokens of length $T{=}2048$); \ouralgo{} requires no calibration data.
The LoNAS-SuperNet (the un-pruned $6.7$\,B configuration) serves as the upper-bound reference.
All baselines run on a GPU; \ouralgo{} runs on a single CPU thread.
We evaluate pruned networks under a unified zero-shot multiple-choice protocol following \citet{munoz2024lonas} on eight commonsense reasoning tasks (BoolQ, PIQA, SIQA, HellaSwag, WinoGrande, ARC-Easy, ARC-Challenge, OpenBookQA) and report the unweighted average $\textsc{Avg}_8$.

\begin{table}[ht]
\centering
\caption{Pruning LLaMA-7B from $6.7$\,B to a $5.7$\,B target via the LoNAS SuperNet: \ouralgo{} versus training-free proxy baselines. The LoNAS-SuperNet row reports the un-pruned performance for reference. \emph{Cost}: search-phase wall-clock (s = seconds, min = minutes). \textbf{Bold}: best per column.}
\label{tab:lonas}
\setlength{\tabcolsep}{3.5pt}
\renewcommand{\arraystretch}{1.05}
\footnotesize
\resizebox{\textwidth}{!}{%
\begin{tabular}{l c c c c c c c c c c r}
\toprule
\textbf{Method} & \textbf{Params} &
BoolQ & PIQA & SIQA & HSwag & WGrnd & ARC-e & ARC-c & OBQA &
\textbf{Avg$_8$} & \textbf{Cost} \\
\midrule
LoNAS-SuperNet & 6.7\,B & 66.02 & 77.58 & 72.93 & 57.49 & 66.85 & 78.41 & 62.29 & 76.20 & 69.72 & --- \\
\midrule
SNIP            & 5.4\,B & 61.93 & 69.21 & 64.99 & 41.26 & 62.04 & 66.50 & 52.47 & 67.40 & 60.73 & 22\,min \\
SynFlow         & 5.2\,B & 61.13 & 70.24 & 63.25 & 45.94 & 59.98 & 66.84 & 51.88 & 66.20 & 60.68 & 23\,min \\
GradNorm        & 5.5\,B & 61.31 & 70.78 & 67.35 & 46.33 & 62.59 & 69.02 & 55.03 & 67.20 & 62.45 & 22\,min \\
W-PCA           & 5.7\,B & 63.82 & 70.95 & 67.91 & 49.24 & 62.75 & 69.57 & 56.40 & 70.00 & 63.83 & 45\,min \\
\midrule
\rowcolor{rowhl}
\textbf{\ouralgo{} (ours)} & \textbf{5.7\,B}
                & \textbf{64.01} & \textbf{72.85} & \textbf{68.94} & \textbf{51.30}
                & \textbf{64.72} & \textbf{72.43} & \textbf{58.70} & \textbf{70.80}
                & \textbf{65.47} & $\mathbf{0.5}$\,\textbf{s} \\
\bottomrule
\end{tabular}%
}
\end{table}

\begin{figure}[t]
\centering
\includegraphics[width=.9\linewidth]{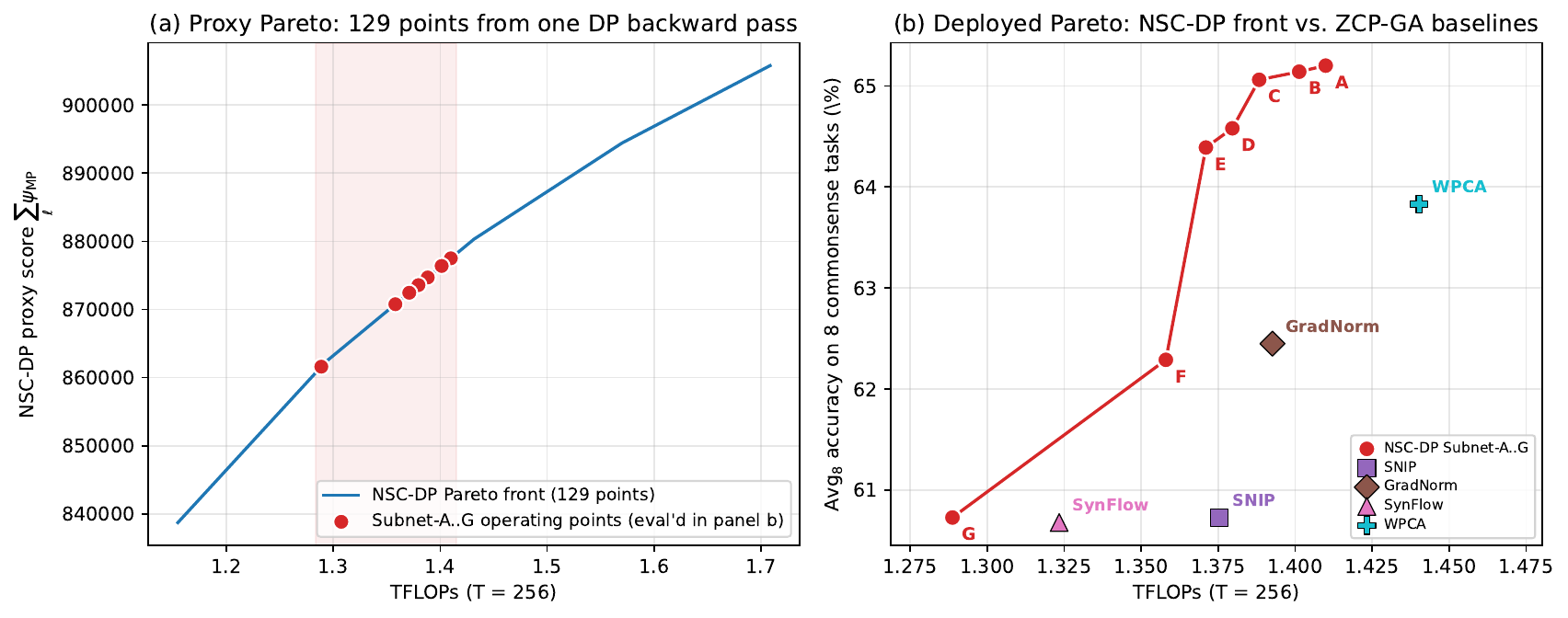}
\caption{\textbf{Pareto front from a single \ouralgo{} run.} Avg$_8$ commonsense reasoning vs.\ TFLOPs (at $T{=}256$) for the seven LoNAS-LLaMA-7B operating points (A--G), all read from the same $0.46$\,s DP backward pass. Full numerical results in Appendix~\ref{app:multiobj}.}
\label{fig:pareto_main}
\end{figure}

\paragraph{Results.}
Table~\ref{tab:lonas} reports four findings.
\paragraph{(1) Best pruned model at matched parameter budget.} \ouralgo{} produces the highest-quality pruned model at the $5.7$\,B budget, outperforming W-PCA by $+1.64$\,pp on $\textsc{Avg}_8$ and dominating every baseline on every individual task.
\paragraph{(2) Specification-based selection at LLM scale.} The four baselines rely on the LoNAS-LLaMA-7B weights and a calibration mini-batch; \ouralgo{} selects from the architectural specification alone, requiring neither.
\paragraph{(3) Search efficiency.} \ouralgo{} completes pruning in 0.46\,s on a CPU thread, $\sim$2{,}800$\times$ faster than the fastest baseline (GradNorm, 22\,min) and $\sim$5{,}900$\times$ faster than the strongest baseline (W-PCA, 45\,min).
\paragraph{(4) Pareto-front output at zero marginal cost.} The same DP backward pass that produces the $5.7$\,B result encodes the optimum at every reachable budget; reading off the seven LoNAS-LLaMA operating points spanning $1.29$--$1.41$\,TFLOPs requires no additional search (Figure~\ref{fig:pareto_main}; Appendix~\ref{app:multiobj}).

\section{Conclusion}
\label{sec:conclusion}

We introduced Neural Spectral Capacity (\ourmethod{}), a closed-form architectural scalar computable from a Transformer's specification alone, and \ouralgo{}, an exact dynamic-programming solver returning the architecture globally maximizing \ourmethod{} under resource constraints. Empirically, \ourmethod{} ranks architectures more accurately than \#Params, \#FLOPs, and representative training-free proxies across seven Transformer and CNN families; \ouralgo{} discovers a Transformer-XL architecture beating the human-designed baseline in $2$ seconds on a single CPU core, and produces the best pruned LLaMA-7B at $5.7$\,B across eight commonsense reasoning tasks \emph{without any calibration data}, ${\sim}5900{\times}$ faster than the strongest baseline. Two further architectural axes extend by construction: per-head decomposition (\S\ref{sec:nsc_definition}) scores MHA/GQA/MQA grouping, and the layer-wise additive sum scores per-layer expert allocation in MoEs. We see \ourmethod{} growing into a general quantitative tool for resource-constrained Transformer architecture choice.


{\small
\bibliographystyle{plainnat}
\bibliography{references}
}

\clearpage
\appendix
\section{RMT in deep learning: lifecycle comparison}
\label{app:rmt_table}

Table~\ref{tab:rmt_lifecycle} organizes the comparison of \S\ref{sec:related} along four axes: when each method applies in the network lifecycle, what it needs beyond the architecture, how it uses the Marchenko--Pastur law, and what it produces.

\begin{table}[h]
\centering
\footnotesize
\caption{\textbf{Random matrix theory in deep learning: what each method needs and produces.} Rows ordered by where in the network lifecycle each method applies.}
\label{tab:rmt_lifecycle}
\setlength{\tabcolsep}{4pt}
\renewcommand{\arraystretch}{1.05}
\begin{tabularx}{\linewidth}{@{}l X X X X@{}}
\toprule
& \textbf{When it applies} & \textbf{Needs beyond the architecture} & \textbf{How it uses the MP law} & \textbf{What it produces} \\
\midrule
\rowcolor{rowhl}
\textbf{\ourmethod{} (ours)} & before any network exists & nothing---matrix shapes and initialization variances suffice & as the quantity itself: the MP spectrum is the capacity being counted & closed-form score + exact argmax (\ouralgo{}) \\
\citet{pennington2017nonlinear} & at initialization & nonlinearity + input distribution & the linear baseline their nonlinear theory extends & limiting spectral density of activations \\
\citet{berlyand2023enhancing} & during training & the partially trained weights & noise criterion: MP-bulk singular values are pruned & a pruned network \\
\citet{martin2021implicit} & after training & the trained checkpoint & the null model that training departs from & a training-quality diagnostic \\
\bottomrule
\end{tabularx}
\end{table}


\section{$\psi$ is not parameter count: extended evidence}
\label{app:nsc_vs_params}

The main paper (\S\ref{sec:spectral_capacity}) claims that spectral capacity $\psi$ is sensitive to architectural structure that parameter count $mn$ cannot see. We provide four independent lines of evidence: (a)~controlled rank ablation at fixed dimensions, (b)~aspect-ratio dependence at fixed parameter count, (c)~heterogeneous operations at matched dimensions, and (d)~compression scenarios at fixed base architecture. Throughout, $\psi$ is well-defined on rank-deficient matrices since zeros contribute zero to the sum---a property required by the LLaMA pruning of \S\ref{sec:exp:llama} and the rank-truncation analysis below.

\paragraph{(a) Low-rank truncation: same dimensions, same parameter count, different $\psi$.}
The cleanest test of $\psi \neq mn$ is to fix the dimensions $(m, n)$ and vary only the spectrum.
We take a $4096 \times 11008$ \texttt{gate\_proj} matrix from layer 0 of LLaMA-7B and apply rank-$k$ SVD truncation for $k \in \{1, 2, 4, \ldots, 4096\}$.
Each truncated matrix occupies the same $\mathbb{R}^{4096 \times 11008}$ ambient space, so $mn = 45{,}088{,}768$ is constant across all truncations.
Yet $\psi$ varies by more than three orders of magnitude (Figure~\ref{fig:nsc_vs_params}): full rank yields $\psi(W) \approx 3174$, rank 1000 retains only $42.5\%$ of the full capacity ($\psi \approx 1349$), and rank 1 collapses to $\psi \approx 2.5$---a $1{,}270\times$ reduction.
The reduction is monotone in $k$ and tracks reconstruction fidelity $\|W - W_k\|_F / \|W\|_F$ closely, confirming that $\psi$ measures the information content of the spectrum rather than the size of the bounding box.

\begin{figure}[h]
\centering
\includegraphics[width=0.95\linewidth]{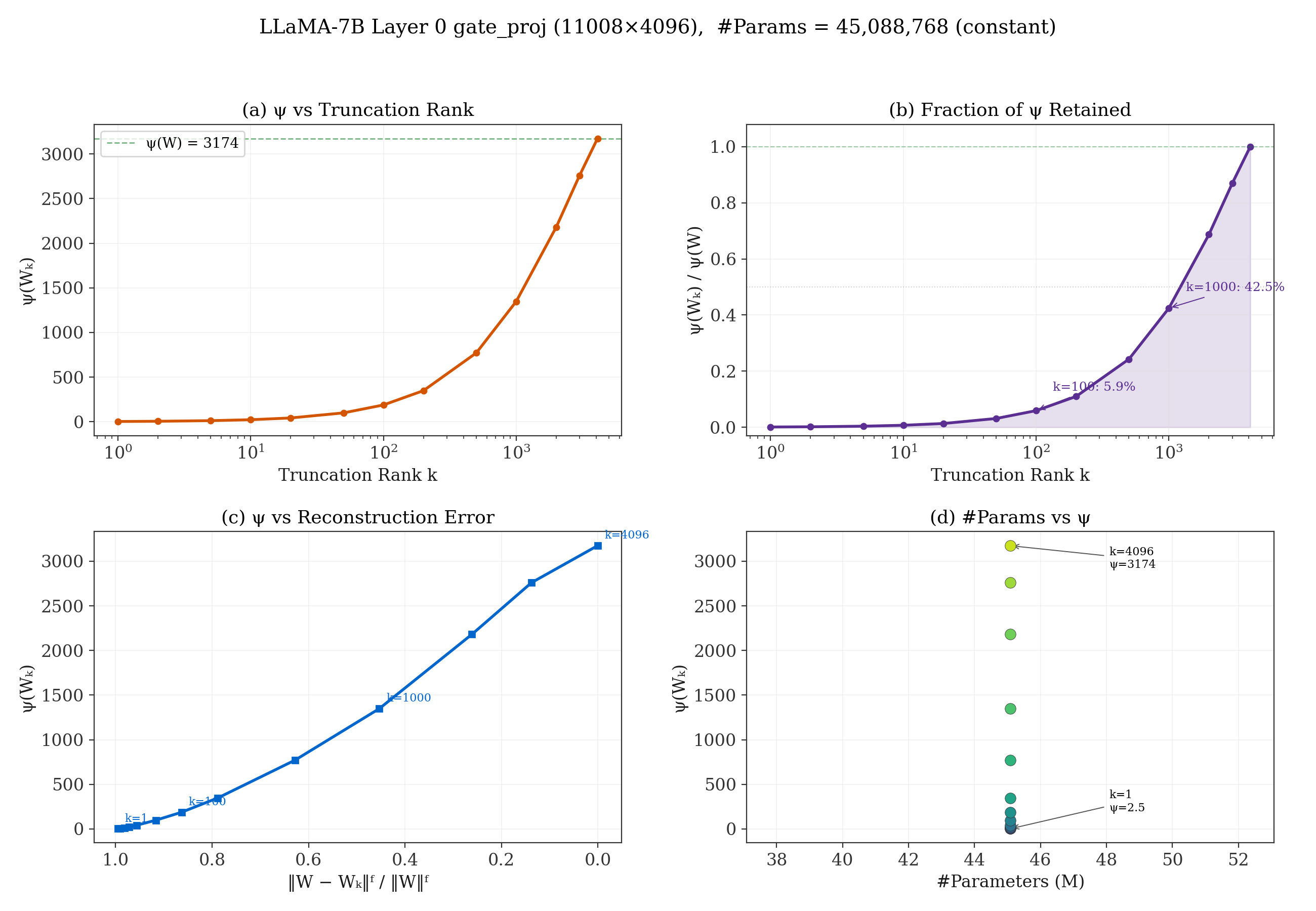}
\caption{$\psi \neq$ parameter count. Low-rank truncation of the $4096 \times 11008$ \texttt{gate\_proj} matrix from LLaMA-7B layer 0. (a)~$\psi$ grows monotonically with truncation rank $k$. (b)~Even at rank 1000, only $42.5\%$ of full capacity is retained. (c)~$\psi$ tracks reconstruction fidelity. (d)~Parameter count is constant ($\approx 45$M) across all ranks while $\psi$ varies by orders of magnitude.}
\label{fig:nsc_vs_params}
\end{figure}

\paragraph{(b) Aspect ratio: same parameter count, different $\psi$.}
Even at fixed parameter count $mn = k$, the value of $\psi$ depends on the aspect ratio $\gamma = \min(m, n) / \max(m, n)$.
Using the closed form $\psi_{\mathrm{MP}}(m, n, s)$ (Eq.~\ref{eq:psi_mp}, \S\ref{sec:closed_form}), we compare matrices analytically: for fixed $k$ and initialization variance $s^2$, the function $m \mapsto \psi_{\mathrm{MP}}(m, k/m, s)$ is maximized at $m = \sqrt{k}$, i.e., among all rectangular matrices with $k$ parameters, the square matrix has the highest spectral capacity.
A geometric intuition for this fact is provided in Appendix~\ref{app:proof_mp}.

A concrete consequence: at $k = 4 \times 10^6$ parameters with Xavier initialization ($s^2 = 2/(m+n)$), a $2000 \times 2000$ square matrix has $\psi_{\mathrm{MP}} \approx 1{,}161$ (versus a Jensen upper bound $N \ln(1 + M s^2) = 2000 \ln 2 \approx 1{,}386$), while a $4000 \times 1000$ rectangular matrix has $\psi_{\mathrm{MP}} \approx 909$ (versus a Jensen upper bound $1000 \ln(1 + 1.6) \approx 956$)---a $\approx 28\%$ gap with identical $mn$.
Both values are computed by 1D Gauss--Kronrod quadrature of the MP integral and verified against the Tulino--Verd\'u closed form (Appendix~\ref{app:psi-concavity}); they sit strictly below their respective Jensen upper bounds, consistent with random-Gaussian rather than perfectly orthogonal spectra.
Parameter count cannot distinguish these architectures; $\psi_{\mathrm{MP}}$ ranks the square one higher, consistent with the empirical observation that balanced layer dimensions tend to yield better-trained models.

\paragraph{(c) Heterogeneous operations: same dimensions, different $\psi$.}
On the FlexiBERT benchmark, layers contain different operation types---self-attention (SA), linear transform (LT), depthwise-separable convolution (DSC)---instantiated with identical hidden size $h$ and FFN width.
Parameter count assigns the same score to layers with the same $(h, \text{FFN})$ regardless of operation type, since $mn$ is determined by dimensions alone.
$\psi$, in contrast, distinguishes operation types because each op type contributes a different set of weight matrices to the layer-wise sum in Eq.~\ref{eq:nsc}: SA contributes Q/K/V projections plus an output projection; LT contributes a single linear map; DSC contributes a depthwise and a pointwise tensor.
The aggregate $\psi$ values differ because the spectral capacity of several smaller matrices is not equal to the spectral capacity of one large matrix at the same total parameter count.

Operation-type structure is the source of FlexiBERT's ranking gap between $\psi$-based proxies and parameter-counting baselines.
Empirically, on the 500-architecture spec-faithful FlexiBERT set (\S\ref{sec:experiments}), \ourmethod{} achieves Spearman $\rho = 0.884$, while \#Params achieves $\rho = 0.676$---a $0.21$-point gap that arises principally from $\psi$'s sensitivity to operation-type structure that \#Params is blind to.

\paragraph{(d) Compression scenarios: same base architecture, different subnet $\psi$.}
A second setting where parameter count fails entirely is structured pruning of a fixed base model.
In the LoNAS experiment (\S\ref{sec:exp:llama}), all subnets share the same LLaMA-7B base architecture: same depth (32 layers), same $d_{\mathrm{model}} = 4096$, same attention configuration.
Subnets differ only in which per-layer FFN dimensions are retained from the base $d_{\mathrm{ffn}} = 11008$.
At a fixed compression budget (e.g., $5.7$B parameters), parameter count is constant by construction across all candidate subnets and provides zero ranking signal.
$\psi$, by contrast, evaluates the spectral content of each pruned weight matrix and ranks subnets by how much spectral capacity the chosen FFN dimensions retain.
This is precisely the regime where $\psi$'s structural sensitivity matters most, and it explains why \ourmethod{}-guided selection achieves the highest average across seven commonsense reasoning benchmarks at the $5.7$B budget (Table~\ref{tab:lonas}).

\paragraph{Summary.}
Across all four regimes---rank ablation, aspect ratio, heterogeneous operations, and compression---$\psi$ discriminates between architectures or weight matrices that parameter count cannot. The first two are controlled mathematical settings in which $mn$ is held exactly constant; the latter two are practical NAS scenarios where the constraint structure makes \#Params uninformative by design. Together, these establish that $\psi$ extracts architectural information genuinely orthogonal to model size.

\section{Choice of aggregation: protocol and analysis}
\label{app:aggregation_choice}


\paragraph{Protocol.}
Table~\ref{tab:agg_ablation} (\S\ref{sec:nsc_definition}) compares the across-layer summation of \ourmethod{} against the two natural alternatives; this appendix details the protocol and explains the results.
We evaluate on the three ranking benchmarks of \S\ref{sec:exp_proxy} under their standard protocols: $500$ FlexiBERT architectures against GLUE (spec-faithful setting, \S\ref{app:flexibert-spec-faithful}), $200$ GPT-2 architectures against WikiText-103 perplexity (negated, so that larger $\rho$ is better), and $1{,}001$ AutoFormer-Tiny architectures against ImageNet Top-1.
Per-matrix capacities and layer capacities $\Psi_l$ are computed once per architecture following \S\ref{sec:nsc_definition}; the rules differ only in how the $\Psi_l$ are combined: summation, the serial bottleneck $\min_l \Psi_l$---the series half of the composition rule for channel capacities~\citep{foggo2023on}---and the product $\prod_l \Psi_l$.
Alongside Spearman $\rho$, Table~\ref{tab:agg_ablation} reports the number of distinct score values each rule assigns: a rule that cannot separate architectures cannot rank them.

\paragraph{Why the bottleneck collapses.}
The failure of $\min_l \Psi_l$ is structural rather than incidental.
Its score always equals one layer's capacity, so a search space admits at most $D$ distinct scores, where $D$ is the number of distinct layer configurations the space allows: $D = 2 \times 3 \times 4 = 24$ for FlexiBERT (hidden size, operation type, FFN configuration), $D = 2 \times 2 \times 2 = 8$ for AutoFormer-Tiny, and $D = 17{,}370$ for GPT-2, whose per-layer FFN width is drawn from a wide integer range.
The measured counts ($22$, $3$, $195$) sit below these bounds because the minimum concentrates on the weakest configurations: an architecture's minimum avoids a weak configuration only if every layer does---unlikely at depth---so on AutoFormer-Tiny $895$ of the $1{,}001$ architectures share a single score and only $3$ of the $8$ configurations ever serve as the minimum.
The same mechanism produces the negative correlation there: adding layers can only lower a minimum, while deeper architectures tend to be more accurate in that space.
GPT-2 is the complementary case: $D$ far exceeds the number of architectures, the bound never binds, and summation still outranks the bottleneck ($0.968$ vs.\ $0.813$).

\paragraph{Why the product conflates depth with capacity.}
With layer capacities well above one, $\prod_l \Psi_l$ grows geometrically in depth, so differences in $L$ overwhelm differences within layers.
The rule is competitive on FlexiBERT ($0.917$), where depth takes only two values, but drops to $0.712$ on GPT-2, where depth varies freely.

\paragraph{Summary.}
The bottleneck fails by construction on spaces with few layer configurations, and the product conflates depth with capacity; summation avoids both failure modes and is the only additive rule---the structural property \ouralgo{} requires (\S\ref{sec:nsc_dp}).

\section{Proofs}

\subsection{Proof of the mutual information identity}
\label{app:proof_mi}

We prove the mutual information identity $I(x; y) = \tfrac{1}{2}\psi(W)$ stated inline in \S\ref{sec:spectral_capacity}.

For $y = Wx + z$ with $x \sim \mathcal{N}(0, I_n)$, $z \sim \mathcal{N}(0, I_m)$ independent, we have $y \sim \mathcal{N}(0, WW^\top + I_m)$ and $y \mid x \sim \mathcal{N}(Wx, I_m)$. Using the differential entropy of $\mathcal{N}(\mu, \Sigma)$ on $\mathbb{R}^d$, $H = \tfrac{1}{2}\ln\det(2\pi e\,\Sigma)$:
\begin{align*}
    I(x; y) &= H(y) - H(y \mid x) \\
    &= \tfrac{1}{2}\ln\det(2\pi e\,(I_m + WW^\top)) - \tfrac{1}{2}\ln\det(2\pi e\, I_m) \\
    &= \tfrac{1}{2}\ln\det(I_m + WW^\top) \;=\; \tfrac{1}{2}\ln\det(I_n + W^\top W) \;=\; \tfrac{1}{2}\psi(W),
\end{align*}
where the third equality uses $\ln\det(2\pi e\, A) = m\ln(2\pi e) + \ln\det(A)$ for $A \in \mathbb{R}^{m\times m}$ (the $m\ln(2\pi e)$ contribution cancels in the difference), the fourth equality uses Sylvester's determinant identity $\det(I_m + AB) = \det(I_n + BA)$, and the final equality uses $\ln\det(I + W^\top W) = \sum_i \ln(1+\sigma_i^2)$, valid because the eigenvalues of $W^\top W$ are the squared singular values of $W$. \qed

\subsection{Gradient Responsiveness}
\label{app:proof_gradient}

\begin{proposition}[Gradient Responsiveness]
\label{prop:gradient}
$\nabla_W \psi(W) = 2W(I + W^\top W)^{-1}$. In the SVD basis, the per-singular-value gradient weight is $g(\sigma) = 2\sigma/(1+\sigma^2)$, which attains its unique maximum at $\sigma = 1$.
\end{proposition}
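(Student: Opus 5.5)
The plan is to compute the matrix gradient by a first-order perturbation of the log-determinant, then read off the singular-value weights in the SVD basis, and finish with a one-variable calculus argument.

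\textbf{Step 1: matrix gradient.} Set $A(W) = I + W^\top W$, which is symmetric positive definite. Jacobi's formula gives $d\ln\det A = \operatorname{tr}(A^{-1}\,dA)$. Here $dA = dW^\top W + W^\top dW$. Using the cyclic property of the trace and the symmetry of $A^{-1}$, the two terms are equal, so
\[
d\psi = 2\operatorname{tr}\big(A^{-1}W^\top dW\big) = \big\langle 2W A^{-1},\, dW\big\rangle_F .
\]
This identifies $\nabla_W\psi = 2W(I+W^\top W)^{-1}$.

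\textbf{Step 2: spectral form.} Write the SVD $W = U\Sigma V^\top$. Then $(I+W^\top W)^{-1} = V(I+\Sigma^\top\Sigma)^{-1}V^\top$, so
\[
\nabla_W\psi = U\,\Sigma(I+\Sigma^\top\Sigma)^{-1}\,2V^\top .
\]
This is diagonal in the singular bases, with entries $2\sigma_i/(1+\sigma_i^2)$. The same weights follow directly from differentiating $\sum_i\ln(1+\sigma_i^2)$ in Definition~\ref{def:psi} with respect to $\sigma_i$. This justifies calling $g(\sigma)=2\sigma/(1+\sigma^2)$ the per-singular-value gradient weight. If one wants to be careful, the perturbation $\partial\sigma_i = u_i^\top (dW) v_i$ for simple singular values links the two descriptions, but the diagonal form above already suffices.

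\textbf{Step 3: maximization.} On $\sigma\ge 0$ we have
\[
g'(\sigma) = \frac{2(1-\sigma^2)}{(1+\sigma^2)^2},
\]
which is positive on $[0,1)$ and negative on $(1,\infty)$. Hence $\sigma=1$ is the unique maximizer, with $g(1)=1$. As an alternative, AM--GM gives $1+\sigma^2\ge 2\sigma$, with equality iff $\sigma=1$.

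\textbf{Main obstacle.} The only delicate point is the interpretation of the per-singular-value weight when singular values are repeated or zero. I would sidestep this by working with the diagonal SVD representation of the gradient rather than with derivatives of individual $\sigma_i$. That representation is well defined regardless of multiplicities, since $\psi$ is a smooth spectral function of $W^\top W$.
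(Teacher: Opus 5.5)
Your proposal is correct and follows the paper's proof essentially step for step. You use Jacobi's formula with cyclicity of the trace to get $d\psi = 2\,\mathrm{tr}[(I+W^\top W)^{-1}W^\top dW]$, diagonalize the gradient via the SVD with weights $2\sigma_i/(1+\sigma_i^2)$, and finish with calculus on $g$. Your sign analysis of $g'$ on $[0,1)$ and $(1,\infty)$ (or the AM--GM bound $g\le 1$) is slightly more complete than the paper's bare ``set $g'(\sigma)=0$'' step, since it certifies that the critical point is the unique global maximizer.
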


\begin{proof}
Let $A = W^\top W$. Then $\mathrm{d}\psi = \mathrm{tr}[(I{+}A)^{-1}\,\mathrm{d}A]$ with $\mathrm{d}A = \mathrm{d}W^\top W + W^\top \mathrm{d}W$. Expanding and using cyclicity of trace:
\begin{equation*}
    \mathrm{d}\psi = 2\,\mathrm{tr}[(I{+}A)^{-1}W^\top\, \mathrm{d}W]
\end{equation*}
giving $\nabla_W \psi = 2W(I{+}W^\top W)^{-1}$. In the SVD basis $W = U\Sigma V^\top$:
\begin{equation*}
    \nabla_W \psi = U\,\mathrm{diag}\!\left(\frac{2\sigma_i}{1{+}\sigma_i^2}\right) V^\top
\end{equation*}
so the per-singular-value gradient weight is $g(\sigma) = 2\sigma/(1{+}\sigma^2)$. Setting $g'(\sigma) = 2(1{-}\sigma^2)/(1{+}\sigma^2)^2 = 0$ gives the unique maximum at $\sigma = 1$.
\end{proof}

\paragraph{Remark.} This is the gradient of $\psi$ with respect to its argument $W$, not the gradient of any task loss. During training, $W$ is updated by $\nabla_W \mathcal{L}_{\mathrm{task}}$, which has no direct relationship to $\nabla_W \psi$. What Proposition~\ref{prop:gradient} establishes is that $\psi$ is most sensitive to singular values of order one---the regime where standard variance-preserving initialization (Xavier, Kaiming) places them. The connection between $\psi$-dynamics and task-loss dynamics in Figure~\ref{fig:motivation_psi} is empirical, not derived from this gradient analysis.

\subsection{Proof of Theorem~\ref{thm:mp} (Marchenko-Pastur Spectral Capacity)}
\label{app:proof_mp}

Let $W \in \mathbb{R}^{m \times n}$ have i.i.d.\ entries with mean $0$ and variance $s^2$, along a sequence of sizes with $\min(m, n) \to \infty$ and $\gamma$ and $Ms^2$ held fixed, as in the statement of the theorem. Without loss of generality assume $m \geq n$ (otherwise consider $W^\top$, which has identical singular values), so that $M = m$, $N = n$.

\paragraph{Step 1: Marchenko-Pastur law.}
The normalized Gram matrix $S = \frac{1}{Ms^2}W^\top W \in \mathbb{R}^{N \times N}$ has eigenvalues $\tilde{\lambda}_1 \geq \cdots \geq \tilde{\lambda}_N \geq 0$. By the Marchenko-Pastur theorem \cite{marchenko1967distribution}, the empirical spectral distribution
\begin{equation*}
    \mu_N = \frac{1}{N}\sum_{i=1}^N \delta_{\tilde{\lambda}_i}
\end{equation*}
converges almost surely (in distribution) to the Marchenko-Pastur law $\mu_{\gamma}$ with density:
\begin{equation*}
    f_{\mathrm{MP}}(\lambda; \gamma) = \frac{\sqrt{(\lambda_+ - \lambda)(\lambda - \lambda_-)}}{2\pi\gamma\lambda}, \quad \lambda \in [\lambda_-, \lambda_+]
\end{equation*}
where $\lambda_\pm = (1 \pm \sqrt{\gamma})^2$.

\paragraph{Step 2: From eigenvalues to $\psi$.}
The eigenvalues of $W^\top W$ are $Ms^2\tilde{\lambda}_i$. Since $\psi(W) = \sum_i \ln(1 + \lambda_i(W^\top W))$:
\begin{align*}
    \frac{1}{N}\psi(W) &= \frac{1}{N}\sum_{i=1}^N \ln\!\left(1 + Ms^2\tilde{\lambda}_i\right) = \int \ln\!\left(1 + Ms^2\lambda\right) \mathrm{d}\mu_N(\lambda)
\end{align*}

\paragraph{Step 3: Convergence.}
The integrand $g(\lambda) = \ln(1 + Ms^2\lambda)$ is a fixed function along the sequence, since $Ms^2$ is held fixed. The extreme nonzero eigenvalues of $S$ converge a.s.\ to the support edges $\lambda_\pm$~\citep{bai2010spectral}, so the spectral mass is eventually contained in a fixed compact set on which $g$ is bounded and continuous, and the weak convergence $\mu_N \to \mu_{\gamma}$ (a.s.) gives
\begin{equation*}
    \frac{1}{N}\psi(W) = \int g \, \mathrm{d}\mu_N \;\xrightarrow{\;\mathrm{a.s.}\;}\; \int_{\lambda_-}^{\lambda_+} \ln\!\left(1 + Ms^2\lambda\right) f_{\mathrm{MP}}(\lambda; \gamma)\, d\lambda \;=\; \mathcal{I}(\gamma, Ms^2),
\end{equation*}
which is the claimed limit. \qed

\paragraph{Note on finite-size convergence.}
Under the convention adopted in \S\ref{sec:closed_form}, $\psi_{\mathrm{MP}}$ is the \emph{definition} of $\psi$ for use as an architecture proxy, not an approximation of a finite-sample quantity. Empirical demonstration of finite-size convergence at typical Transformer dimensions is given in Appendix~\ref{app:closed_form_validity}.

\paragraph{Geometric intuition for the aspect-ratio result.}
Appendix~\ref{app:nsc_vs_params}~(b) shows numerically that for fixed $mn$, the square matrix maximizes $\psi_{\mathrm{MP}}$. Intuitively, varying $\gamma$ at fixed $mn$ trades off the number of eigenvalues $N = \sqrt{\gamma\,mn}$ (favoring $\gamma = 1$) against the per-eigenvalue contribution $\ln(1 + Ms^2\lambda)$ where $M = \sqrt{mn/\gamma}$ scales as $1/\sqrt{\gamma}$ (favoring $\gamma \to 0$). The product $\psi_{\mathrm{MP}}(\gamma) = N\,\mathbb{E}_\lambda[\ln(1+Ms^2\lambda)]$ is maximized at $\gamma = 1$, the spectral-theoretic counterpart of the classical result that mutual information across parallel Gaussian channels at fixed total power is maximized by uniform power allocation.

\subsection{Strict Concavity of $\psi_{\mathrm{MP}}$ and Optimality of Uniform FFN Allocation}
\label{app:psi-concavity}

This appendix substantiates the structural claim made in Section~\ref{sec:exp:search}: under a fixed total feed-forward parameter budget, the \ourmethod{}-maximizing per-layer FFN allocation is uniform.
The role of concavity should be stated precisely.
It underwrites the uniform-allocation corollary at the end of this appendix---it explains \emph{what} \ouralgo{} returns on this space.
\ouralgo{}'s global-optimality guarantee does not depend on it: that guarantee follows from layer-wise additivity, exhaustive enumeration of the network-level configurations, and exact solution of the inner knapsack (\S\ref{sec:nsc_dp}), and holds for arbitrary per-layer value tables.
Were $\psi_{\mathrm{MP}}$ not concave, \ouralgo{} would still return the global proxy-maximizer---just not necessarily a uniform one.

\begin{proposition}[Strict concavity of $\psi_{\mathrm{MP}}$ in its dimension arguments]
\label{prop:psi-concavity}
For fixed $m$ and $s > 0$, the map $n \mapsto \psi_{\mathrm{MP}}(n, m, s)$ is strictly concave on $(0, \infty)$---hence strictly concave along the integer grids \ouralgo{} searches---and, by the symmetry of $\psi_{\mathrm{MP}}$ in its two dimension arguments, the same holds in $m$.
No relation between $m$ and $n$ is assumed.
\end{proposition}

In the Transformer-XL instantiation, $n$ plays $d_{\mathrm{ff}}$ and $m$ plays $d_{\mathrm{model}}$; the proposition holds on all of $(0, \infty)$, with no restriction to $d_{\mathrm{ff}} \geq d_{\mathrm{model}}$.

\begin{proof}
Let $\beta \coloneqq s^2$ and write $M = \max(m,n)$, $N = \min(m,n)$, $\gamma = N/M$ as in Theorem~\ref{thm:mp}.

\emph{Step 1: fixed-point representation of $\psi_{\mathrm{MP}}$.}
Consider the pair of equations
\begin{equation}
\label{eq:mp_fixed_point}
\delta \;=\; \frac{1}{1 + \beta n \tilde{\delta}},
\qquad
\tilde{\delta} \;=\; \frac{1}{1 + \beta m \delta}.
\end{equation}
Substituting the second equation into the first shows that a solution must satisfy $G(\delta; n) = 0$, where
\begin{equation*}
G(\delta; n) \;\coloneqq\; \delta\left(1 + \frac{\beta n}{1 + \beta m \delta}\right) - 1 .
\end{equation*}
Since $G(0; n) = -1 < 0$, $\partial G/\partial \delta = 1 + \beta n / (1+\beta m \delta)^2 > 0$, and $G(\delta; n) \to \infty$ as $\delta \to \infty$, Eq.~\eqref{eq:mp_fixed_point} has a unique positive solution $(\delta^*, \tilde{\delta}^*)$ for every $m, n, \beta > 0$.
The pair is the classical Marchenko--Pastur self-consistency in symmetric form: writing $\eta(x) \coloneqq \int f_{\mathrm{MP}}(\lambda; \gamma)/(1 + x\lambda)\, d\lambda$ for the $\eta$-transform of the MP law, the unique positive solution is
\begin{equation}
\label{eq:delta_resolvent}
\delta^* = \frac{m - N + N\,\eta(M\beta)}{m},
\qquad
\tilde{\delta}^* = \frac{n - N + N\,\eta(M\beta)}{n},
\end{equation}
the deterministic equivalents of the normalized resolvent traces $\frac{1}{m}\,\mathrm{tr}\,(I_m + W^{\!\top} W)^{-1}$ and $\frac{1}{n}\,\mathrm{tr}\,(I_n + W W^{\!\top})^{-1}$ for $W \in \mathbb{R}^{n \times m}$ with i.i.d.\ entries of variance $s^2$~\citep[Thm.~2.39]{tulino2004random}.

Define
\begin{equation}
\label{eq:Phi_def}
\Phi(\delta, \tilde{\delta}; n) \;\coloneqq\; n \ln(1 + \beta m \delta) \;+\; m \ln(1 + \beta n \tilde{\delta}) \;-\; \beta m n\, \delta \tilde{\delta}.
\end{equation}
Direct differentiation gives
$\partial \Phi/\partial \delta = \beta m n \left[(1+\beta m \delta)^{-1} - \tilde{\delta}\right]$ and
$\partial \Phi/\partial \tilde{\delta} = \beta m n \left[(1+\beta n \tilde{\delta})^{-1} - \delta\right]$,
so $(\delta, \tilde{\delta})$ is a stationary point of $\Phi(\cdot, \cdot\,; n)$ exactly when it solves Eq.~\eqref{eq:mp_fixed_point}.
We claim the value identity
\begin{equation}
\label{eq:phi_identity}
\psi_{\mathrm{MP}}(n, m, s) \;=\; \Phi(\delta^*, \tilde{\delta}^*; n).
\end{equation}
To derive it, view both sides as functions of $\beta$ at fixed $(m, n)$, with $(\delta^*(\beta), \tilde{\delta}^*(\beta))$ the positive root of Eq.~\eqref{eq:mp_fixed_point}.
At $\beta = 0$ both sides vanish (there $\delta^* = \tilde{\delta}^* = 1$).
For the left side, differentiating the MP integral (Eqs.~\ref{eq:mp_limit} and~\ref{eq:psi_mp}) in $\beta$ gives
\begin{equation*}
\frac{d \psi_{\mathrm{MP}}}{d\beta}
= N \!\int\! \frac{M\lambda}{1 + \beta M \lambda}\, f_{\mathrm{MP}}(\lambda;\gamma)\, d\lambda
= \frac{N\bigl(1 - \eta(M\beta)\bigr)}{\beta}
= \frac{n\bigl(1 - \tilde{\delta}^*\bigr)}{\beta}
= m n\, \delta^* \tilde{\delta}^*,
\end{equation*}
where the third equality uses Eq.~\eqref{eq:delta_resolvent} and the fourth uses $1 - \tilde{\delta}^* = \beta m \delta^* \tilde{\delta}^*$, a rearrangement of the second fixed-point equation.
For the right side, stationarity means only the explicit $\beta$-dependence contributes (envelope theorem):
\begin{equation*}
\frac{d}{d\beta}\,\Phi(\delta^*, \tilde{\delta}^*; n)
= \left.\frac{\partial \Phi}{\partial \beta}\right|_{(\delta^*\!,\, \tilde{\delta}^*)}
= \frac{m n\, \delta^*}{1+\beta m \delta^*} + \frac{m n\, \tilde{\delta}^*}{1+\beta n \tilde{\delta}^*} - m n\, \delta^* \tilde{\delta}^*
= m n\, \delta^* \tilde{\delta}^*,
\end{equation*}
applying the two fixed-point equations to the first two terms.
Equal values at $\beta = 0$ and equal $\beta$-derivatives establish Eq.~\eqref{eq:phi_identity}.
(Numerically, the two sides of Eq.~\eqref{eq:phi_identity} agree to relative error below $10^{-15}$ across the dimension ranges of our benchmarks.)

\emph{Step 2: envelope identity for the marginal capacity.}
Differentiating Eq.~\eqref{eq:phi_identity} in $n$, stationarity again cancels the dependence through $(\delta^*, \tilde{\delta}^*)$:
\begin{equation}
\label{eq:envelope}
\frac{d \psi_{\mathrm{MP}}}{d n}
= \left.\frac{\partial \Phi}{\partial n}\right|_{(\delta^*\!,\, \tilde{\delta}^*)}
= \ln\!\big(1 + \beta m \delta^*\big) + \frac{\beta m \tilde{\delta}^*}{1 + \beta n \tilde{\delta}^*} - \beta m\, \delta^* \tilde{\delta}^*
= \ln\!\big(1 + \beta m\, \delta^*(n)\big),
\end{equation}
the last two terms cancelling by the first fixed-point equation.

\emph{Step 3: monotonicity and strict concavity.}
Implicit differentiation of $G(\delta^*(n); n) = 0$ gives
\begin{equation*}
\frac{d \delta^*}{d n} \;=\; -\,\frac{\partial G/\partial n}{\partial G/\partial \delta}
\;=\; -\,\frac{\beta \delta^* / (1 + \beta m \delta^*)}{\,1 + \beta n/(1+\beta m \delta^*)^2\,} \;<\; 0,
\end{equation*}
hence
\begin{equation*}
\frac{d^2 \psi_{\mathrm{MP}}}{d n^2}
\;=\; \frac{\beta m}{1 + \beta m\, \delta^*(n)}\,\frac{d \delta^*}{d n} \;<\; 0
\qquad \text{for all } n \in (0, \infty).
\end{equation*}
Strict concavity in $m$ follows from the invariance of $\Phi$ under the simultaneous swap $(n, \delta) \leftrightarrow (m, \tilde{\delta})$, mirroring the symmetry $\psi_{\mathrm{MP}}(n, m, s) = \psi_{\mathrm{MP}}(m, n, s)$.
\end{proof}

\paragraph{Remark: the envelope formula is the rank-one mechanism, made deterministic.}
Adding one row $w \sim \mathcal{N}(0, s^2 I_m)$ to $W$ changes $\psi$ by $\ln(1 + w^{\!\top}(I + W^{\!\top} W)^{-1} w)$ (matrix determinant lemma), whose conditional expectation concentrates on $\ln(1 + s^2\, \mathrm{tr}\,(I + W^{\!\top} W)^{-1}) \approx \ln(1 + \beta m \delta^*)$---exactly Eq.~\eqref{eq:envelope}.
The probabilistic argument below runs on this mechanism at finite sizes; the proof above is its deterministic sharpening.

\paragraph{A second, probabilistic proof of (non-strict) discrete concavity.}
Let $W_d \in \mathbb{R}^{d \times m}$ have i.i.d.\ $\mathcal{N}(0, s^2)$ entries, and let $M_d := I + W_d^\top W_d$. Adding one row $w \sim \mathcal{N}(0, s^2 I_{m})$ independent of $W_d$ gives $M_{d+1} = M_d + ww^\top$. By the matrix determinant lemma,
\begin{equation*}
\psi(W_{d+1}) - \psi(W_d) \;=\; \ln\det M_{d+1} - \ln\det M_d \;=\; \ln\bigl(1 + w^\top M_d^{-1} w\bigr).
\end{equation*}
Taking conditional expectation over $w$ (write $w = s\,z$, $z \sim \mathcal{N}(0, I_{m})$),
\begin{equation*}
\mathbb{E}\bigl[\psi(W_{d+1}) - \psi(W_d)\,\bigm|\, W_d\bigr]
\;=\;
\mathbb{E}_{z}\!\left[\ln\!\Bigl(1 + s^2 \sum_{i=1}^{m} \tfrac{z_i^2}{1 + \sigma_i^2(W_d)}\Bigr)\right],
\end{equation*}
where $\sigma_i^2(W_d)$ are the eigenvalues of $W_d^\top W_d$. As $d$ increases, each $\sigma_i^2(W_d)$ grows (rank-1 PSD updates can only increase eigenvalues, by Weyl's inequality), so each coefficient $1/(1 + \sigma_i^2(W_d))$ shrinks. Since $\ln(1 + \cdot)$ is monotone increasing, the expected marginal gain $\mathbb{E}[\psi(W_{d+1}) - \psi(W_d)]$ is non-increasing in $d$: the map $d \mapsto \mathbb{E}[\psi(W_d)]$ is discretely concave at every finite size.

Passing from this statement to the concavity of $\psi_{\mathrm{MP}}$ requires care: by the central limit theorem for linear spectral statistics~\citep{bai2010spectral}, $\psi(W) - \psi_{\mathrm{MP}}$ is $O_P(1)$, not $o(1)$---only the relative deviation vanishes (Appendix~\ref{app:mp_convergence})---so a limit taken along $n$ cannot be exchanged with a second difference in $n$.
The passage is completed by taking the limit along the family that realizes $\psi_{\mathrm{MP}}$ at \emph{finite} $(m, n)$.
Because $\psi_{\mathrm{MP}}$ depends on the specification only through $(N, \gamma, Ms^2)$, it is exactly $1$-homogeneous under the replica map $(n, m, s) \mapsto (kn, km, s/\sqrt{k})$, which fixes both $\gamma$ and $Ms^2$:
$\psi_{\mathrm{MP}}(kn, km, s/\sqrt{k}) = k\,\psi_{\mathrm{MP}}(n, m, s)$ for every $k \in \mathbb{N}$.
Along this family the hypotheses of Theorem~\ref{thm:mp} hold with $\gamma$ and $Ms^2$ \emph{fixed}, so
\begin{equation*}
\tfrac{1}{k}\,\psi\bigl(W^{(k)}\bigr) \;=\; N \cdot \frac{\psi(W^{(k)})}{kN} \;\xrightarrow{\;\mathrm{a.s.}\;}\; N\,\mathcal{I}(\gamma, Ms^2) \;=\; \psi_{\mathrm{MP}}(n, m, s)
\qquad (k \to \infty),
\end{equation*}
with $W^{(k)}$ of size $kn \times km$ and i.i.d.\ entries of variance $s^2/k$.
Uniform integrability upgrades this to convergence of expectations: $0 \le \psi(W) = \ln\det(I + W^\top W) \le \mathrm{tr}(W^\top W) = \|W\|_F^2$, and $\tfrac{1}{k}\|W^{(k)}\|_F^2$ is an average of i.i.d.\ variables with expectation $mns^2$ for every $k$, converging a.s.\ to that constant, so Pratt's generalized dominated convergence theorem gives $\mathbb{E}\bigl[\tfrac{1}{k}\psi(W^{(k)})\bigr] \to \psi_{\mathrm{MP}}(n, m, s)$.
The rank-one argument applies unchanged at each $k$: incrementing $n$ by one adds $k$ rows at fixed column count and entry variance, and a sum of $k$ consecutive non-increasing marginal gains is non-increasing, so every $n \mapsto \mathbb{E}\bigl[\tfrac{1}{k}\psi(W^{(k)})\bigr]$ is discretely concave.
Discrete concavity at a grid point is a single closed inequality among three values, so it passes to the pointwise limit: $n \mapsto \psi_{\mathrm{MP}}(n, m, s)$ is discretely concave. $\square$

\paragraph{Numerical check of the implementation.}
As an independent check that the quadrature implementation of $\psi_{\mathrm{MP}}$ exhibits the concavity proven in Proposition~\ref{prop:psi-concavity}, Figure~\ref{fig:psi-concavity-numerical} plots the marginal gain $\partial\psi_{\mathrm{MP}}/\partial d_{\mathrm{ff}}$ across all $(d_{\mathrm{model}}, d_{\mathrm{ff}})$ pairs in our Transformer-XL search space: all seven curves are strictly decreasing, and all $189$ discrete second differences are strictly negative. Table~\ref{tab:psi-marginal-gain} reports the marginal gain at the two endpoints of the search range: it decreases by roughly $2\times$ for every value of $d_{\mathrm{model}}$, consistent with the proposition.

\begin{figure}[H]
\centering
\includegraphics[width=0.7\linewidth]{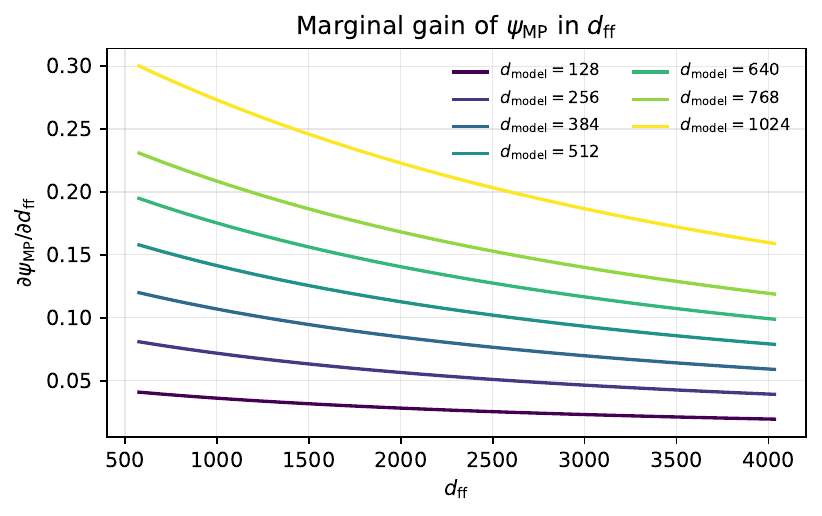}
\caption{\textbf{Implementation check of Proposition~\ref{prop:psi-concavity}.} Marginal gain $\partial\psi_{\mathrm{MP}}/\partial d_{\mathrm{ff}}$ (forward differences over a $d_{\mathrm{ff}}$ grid of step $128$) as a function of $d_{\mathrm{ff}}$ for each of the seven embedding dimensions in the Transformer-XL search space, computed at $s=0.02$. All seven curves are strictly decreasing, with all $189$ second differences of $\psi_{\mathrm{MP}}(d_{\mathrm{ff}}, d_{\mathrm{model}}, s)$ negative across the full range, as the proposition requires.}
\label{fig:psi-concavity-numerical}
\end{figure}

\begin{table}[h]
\centering
\caption{Marginal gain $\partial\psi_{\mathrm{MP}}/\partial d_{\mathrm{ff}}$ at the two endpoints of the FFN search range, for each embedding dimension. The marginal gain attenuates by roughly $2\times$ across the range, consistent with Proposition~\ref{prop:psi-concavity}.}
\label{tab:psi-marginal-gain}
\setlength{\tabcolsep}{6pt}
\begin{tabular}{c c c c}
\toprule
$d_{\mathrm{model}}$ & at $d_{\mathrm{ff}} = 512$ & at $d_{\mathrm{ff}} = 4096$ & all 2nd diff.\ ${<}\,0$? \\
\midrule
128  & 0.041 & 0.020 & \checkmark \\
256  & 0.081 & 0.040 & \checkmark \\
384  & 0.120 & 0.060 & \checkmark \\
512  & 0.158 & 0.080 & \checkmark \\
640  & 0.195 & 0.100 & \checkmark \\
768  & 0.231 & 0.120 & \checkmark \\
1024 & 0.300 & 0.161 & \checkmark \\
\bottomrule
\end{tabular}
\end{table}

\paragraph{Corollary (Uniform allocation optimality).}
Under a fixed total FFN parameter budget $\sum_{l=1}^{L} d_{\mathrm{ff}, l} \cdot (2 d_{\mathrm{model}} + 1) \leq B$, the unique \ourmethod{}-maximizing per-layer allocation $\{d_{\mathrm{ff}, l}\}_{l=1}^{L}$ of the continuous relaxation is uniform, $d_{\mathrm{ff}, l} = B / [L (2 d_{\mathrm{model}} + 1)]$ for all $l$; on the discrete search grid the optimum is its rounding. This is Jensen's inequality applied to Proposition~\ref{prop:psi-concavity}: the objective $\sum_l \psi_{\mathrm{MP}}(d_{\mathrm{ff}, l}, d_{\mathrm{model}}, s)$ is a sum of strictly concave functions of the $d_{\mathrm{ff}, l}$ under a linear constraint on $\sum_l d_{\mathrm{ff}, l}$, so replacing any non-uniform allocation by its average strictly increases it. The uniform allocation discovered by \ouralgo{} in Section~\ref{sec:exp:search} is therefore not coincidental but a structural consequence of \ourmethod{}'s spectral form. $\square$


\section{Multi-Budget Search via \ouralgo{}}
\label{app:multiobj}

\subsection{Motivation}

We elaborate on the multi-budget property stated in \S\ref{sec:nsc_dp}. Because (i) the resource axis (parameters or, equivalently in
the LoNAS architecture, TFLOPs) is an integer sum of per-block contributions
and (ii) the spectral-capacity score $\psi_{\mathrm{MP}}$ is per-layer additive,
the DP table after the final layer encodes
\begin{equation}
\mathcal{P}_{\ouralgo{}}
\;=\;
\Bigl\{\bigl(b,\;\textstyle\max_{\text{subnet}\,:\,\textsc{Params}\le b}\;\textstyle\sum_\ell \Psi_\ell(r_\ell, h_\ell)\bigr)
\;:\; b \in \mathcal{B}\Bigr\},
\end{equation}
where $\Psi_\ell(r_\ell, h_\ell)$ is the layer capacity (\S\ref{sec:nsc_definition}) at the chosen $(r_\ell, h_\ell)$ configuration. $\mathcal{P}_{\ouralgo{}}$ is the entire Pareto front over the discretized budget grid $\mathcal{B}$.
Switching the deployment budget therefore requires \emph{neither additional
search nor a new proxy cache}; the practitioner reads a different cell of
the same DP table.

\subsection{Empirical demonstration: seven subnets from a single DP run}

To make this property concrete, we instantiate the seven operating points
(``Subnet-A'' through ``Subnet-G'') reported in
\citet{munoz2024lonas}'s Table~4 by reading them off a \emph{single}
\ouralgo{} execution. No re-search, no per-budget retraining, and no
per-budget proxy re-computation is performed: all seven configurations are
obtained from the same $\psi_{\mathrm{MP}}$ cache and the same DP backward
pass used in \S\ref{sec:exp:llama}. A--G correspond to LoNAS's published TFLOPs operating points, while the main-text result is read at the $5.7$\,B parameter budget---a different point on the same Pareto front.

Table~\ref{tab:appendix_AG_pareto} reports the resulting subnets, their
TFLOPs (computed at the conventional $T\!=\!256$ sequence length used
throughout the LoNAS literature), per-task accuracy under our unified
evaluation protocol, and the amortised search wall-clock time per operating
point. The LoNAS-SuperNet (maximal configuration, $r_\ell\!=\!32$,
$h_\ell\!=\!11008$ for all $\ell$) is included as the upper-bound reference.

\begin{table}[!htbp]
\centering
\caption{
\textbf{Seven operating points along the \ouralgo{} Pareto front,
extracted from a single DP backward pass.}
TFLOPs are computed at sequence length $T\!=\!256$.
All seven \ouralgo{} rows share \emph{the same} $0.46$\,s DP backward pass:
they are not seven independent searches but seven read-outs of one
DP table. The marginal cost of obtaining an additional operating
point is therefore \emph{exactly zero}; switching budgets is a
$\mathcal{O}(B)$ table look-up. The LoNAS-SuperNet row is the
maximal-configuration upper bound and performs no search.}
\label{tab:appendix_AG_pareto}
\setlength{\tabcolsep}{4pt}
\renewcommand{\arraystretch}{1.05}
\footnotesize
\resizebox{\textwidth}{!}{%
\begin{tabular}{l c c c c c c c c c c c}
\toprule
\textbf{Subnet} & \textbf{TFLOPs} &
BoolQ & PIQA & SIQA & HSwag & WGrnd & ARC-e & ARC-c & OBQA &
\textbf{Avg$_8$} & \textbf{Search} \\
\midrule
LoNAS-SuperNet  & 1.71 & 66.02 & 77.58 & 72.93 & 57.49 & 66.85 & 78.41 & 62.29 & 76.20 & \textbf{69.72} & --- \\
\midrule
\ouralgo{}-A & 1.41 & 64.01 & 72.42 & 68.17 & 51.26 & 64.09 & 72.52 & 58.53 & 70.60 & 65.20 & \multirow{7}{*}{\shortstack[c]{\textbf{0.46\,s}\\\emph{(shared,}\\\emph{single DP)}}} \\
\ouralgo{}-B & 1.40 & 63.73 & 72.42 & 68.27 & 51.05 & 63.46 & 72.18 & 58.79 & 71.20 & 65.14 & \\
\ouralgo{}-C & 1.39 & 63.61 & 72.25 & 67.50 & 50.69 & 63.38 & 72.18 & 58.87 & 72.00 & 65.06 & \\
\ouralgo{}-D & 1.38 & 63.73 & 72.03 & 66.73 & 49.95 & 63.85 & 71.34 & 58.19 & 70.80 & 64.58 & \\
\ouralgo{}-E & 1.37 & 63.15 & 71.49 & 67.71 & 49.73 & 63.69 & 71.30 & 57.08 & 71.00 & 64.39 & \\
\ouralgo{}-F & 1.36 & 62.51 & 69.37 & 66.63 & 46.54 & 62.51 & 68.06 & 54.86 & 67.80 & 62.29 & \\
\ouralgo{}-G & 1.29 & 59.27 & 69.48 & 64.64 & 44.73 & 62.27 & 66.04 & 52.82 & 66.60 & 60.73 & \\
\bottomrule
\end{tabular}%
}
\end{table}

Figure~\ref{fig:pareto_main} (main paper) visualizes the corresponding accuracy--TFLOPs Pareto front; the monotone shape is a structural consequence of the per-layer additivity of $\psi_{\mathrm{MP}}$ (adjacent budgets produce adjacent subnets, removing the run-to-run variance of stochastic searches).

\subsection{Discussion}

Two observations follow from Table~\ref{tab:appendix_AG_pareto}.

\paragraph{(i) The DP table is the Pareto front.}
The seven \ouralgo{} rows are not seven independent searches: they are
seven read-outs of the same $0.46$\,s DP backward pass. Every
\emph{additional} budget the practitioner queries costs $\mathcal{O}(B)$
table look-ups (microseconds), not a new search.

\paragraph{(ii) Smooth, monotone front.}
Avg$_8$ decreases monotonically from $65.20\%$ at $1.41$\,TFLOPs
(\ouralgo{}-A) to $60.73\%$ at $1.29$\,TFLOPs (\ouralgo{}-G)---a
$4.47$\,pp range across $0.12$\,TFLOPs of compute reduction.
The smoothness of this front is itself a downstream consequence of the
closed-form, additive structure of $\psi_{\mathrm{MP}}$: small changes in
the budget produce small, predictable changes in the recovered subnet,
removing the run-to-run variance that plagues stochastic
searches.

\paragraph{Summary.}
\ouralgo{} returns Pareto-front output as a \emph{free} structural consequence of its closed-form, additive formulation rather than as the output of an explicit per-budget search. Among the methods evaluated in this paper, \ouralgo{} is the only one that returns an entire Pareto front in a single sub-second invocation while requiring neither labels nor GPU forward passes.


\section{Extended Experimental Results}
\label{app:realdata}

\subsection{FlexiBERT: original (Serianni) vs.\ spec-faithful setting}
\label{app:flexibert-spec-faithful}

The 500-architecture FlexiBERT benchmark used in \S\ref{sec:exp_proxy} originates from \citet{tuli2022flexibert}, who defined a heterogeneous Transformer search space in which each architecture independently varies four dimensions: hidden size $h \in \{128, 256\}$, depth $L \in \{2, 4\}$, \emph{per-layer} feed-forward dimension $d_{\mathrm{ff}, l} \in \{512, 1024\}$, and per-layer attention operation type. \citet{serianni2023nasbench} subsequently retrained these architectures on GLUE to obtain ground-truth scores, and this is the standard ranking benchmark adopted by all subsequent training-free Transformer proxies (W-PCA~\citep{wang2025wpca}, ZeroLM~\citep{chen2025zerolm}, attention-based proxies in~\citet{serianni2023nasbench}).

\paragraph{The issue with the original (Serianni) evaluation protocol.}
The proxy-evaluation code released by \citet{serianni2023nasbench}, and reused by subsequent training-free proxies that report numbers on this benchmark, instantiates \emph{every} candidate architecture with two of the four search dimensions hard-coded to fixed constants. Concretely, regardless of the architecture's declared $h$ and $\{d_{\mathrm{ff}, l}\}$, the model is built with $\texttt{hidden\_size}=256$ and $\texttt{intermediate\_size}=1024$ for all layers (see the original \texttt{build\_flexibert} implementation; the same constants are reproduced whenever subsequent proxy papers reuse \texttt{build\_flexibert} as the model factory).
This collapses two of the four search dimensions: in our census of the 500-architecture set, $251/500$ architectures declare $h=128$ in their specification but are scored at $h=256$, and the per-layer FFN structure---one of the two dimensions FlexiBERT was specifically designed to expose---is replaced by a constant.
The downstream effect is that any proxy whose value depends on the architecture specification (parameter count, FLOPs, W-PCA, \ourmethod{}, etc.) is computed on a \emph{scrambled} spec rather than the architecture whose GLUE score is being predicted. Proxies that score from instantiated weights or activations are similarly affected, since the model they instantiate no longer matches the architecture that was trained.

\paragraph{The spec-faithful setting.}
We restore the search-space-faithful protocol: each of the 500 architectures is scored with its declared $(h, \{d_{\mathrm{ff}, l}\})$, recovering the full $\{128, 256\} \times \{512, 1024\}^L$ structure across the 500 specs. The set of architectures, the GLUE ground truth, and the proxy implementations are otherwise identical to the original setting. We refer to this as the \emph{Spec-faithful} setting and use it as the default in \S\ref{sec:exp_proxy}; results under the original \emph{Serianni} setting are reported below for direct comparability with prior tables.

\paragraph{Empirical effect: structure-aware proxies improve, structure-blind ones do not.}
Table~\ref{tab:flexibert-serianni-vs-spec-faithful} reports Kendall $\tau$ and Spearman $\rho$ against ground-truth GLUE for representative proxies under both settings. Two observations: \emph{(i)}~Architectural quantities improve under the spec-faithful protocol, with \ourmethod{} gaining substantially ($\tau$: $0.544 \to 0.695$) and \#Params modestly ($0.454 \to 0.485$). \emph{(ii)}~Among training-free proxies, responses diverge: W-PCA, SNIP, GradNorm, and SynFlow recover from suppressed values (W-PCA: $0.522 \to 0.638$; SynFlow: $-0.107 \to 0.028$), consistent with their reported numbers being evaluated on a population whose two of four declared dimensions are collapsed; ZeroLM, conversely, \emph{decreases} ($0.543 \to 0.527$), suggesting its prior lead drew partly on the collapsed evaluation rather than on architectural signal. Together these patterns confirm that the original protocol systematically depresses the measurable signal of structure-sensitive proxies and that the spec-faithful setting, not Serianni, is the appropriate apples-to-apples ranking benchmark for the FlexiBERT search space as originally defined.

\begin{table}[h]
\centering
\small
\caption{\textbf{FlexiBERT ranking quality: Serianni vs.\ spec-faithful protocol.} Same 500 architectures and GLUE ground truth; only the architecture spec passed to the proxy differs. Architectural quantities (\ourmethod{}, \#Params) improve under the spec-faithful protocol; among training-free proxies, W-PCA, SNIP, GradNorm, and SynFlow recover from suppressed values while ZeroLM \emph{decreases}. \textbf{Bold}: best per column; \underline{underline}: second best.}
\label{tab:flexibert-serianni-vs-spec-faithful}
\setlength{\tabcolsep}{6pt}
\begin{tabular}{l cc cc}
\toprule
& \multicolumn{2}{c}{\textbf{Serianni (original)}} & \multicolumn{2}{c}{\textbf{Spec-faithful (ours)}} \\
\cmidrule(lr){2-3}\cmidrule(lr){4-5}
\textbf{Method} & $\tau$ & $\rho$ & $\tau$ & $\rho$ \\
\midrule
\rowcolor{rowhl}
\textbf{\ourmethod{} (ours)}              & \textbf{0.544} & \textbf{0.776} & \textbf{0.695} & \textbf{0.884} \\
\#Params                                  & 0.454 & 0.652 & 0.485 & 0.676 \\
\midrule
W-PCA~\citep{wang2025wpca}                & 0.522 & 0.757 & \underline{0.638} & \underline{0.850} \\
ZeroLM~\citep{chen2025zerolm} & \underline{0.543} & \underline{0.775} & 0.527 & 0.730 \\
SNIP~\citep{lee2019snip}                  & 0.096 & 0.146 & 0.275 & 0.411 \\
GradNorm~\citep{abdelfattah2021zerocost}  & 0.013 & 0.025 & 0.210 & 0.316 \\
SynFlow~\citep{tanaka2020pruning}         & $-0.107$ & $-0.159$ & 0.028 & 0.040 \\
\bottomrule
\end{tabular}
\end{table}

\subsection{Controlled-correlation analysis across architecture families}
\label{app:controlled-correlation}

To test whether \ourmethod{} contributes ranking signal beyond \#Params and \#FLOPs, we report two complementary controls on every benchmark in our evaluation: a non-parametric \emph{windowed} Kendall $\tau$ and \emph{Kendall partial correlation}~\citep{kendall1942partial,conover1981rank}.

\paragraph{Windowed control.}
For each control variable $Z \in \{\#\mathrm{Params}, \#\mathrm{FLOPs}\}$, we restrict $\tau$ to architecture pairs $(i,j)$ with $|Z_i - Z_j|/\max(Z_i, Z_j) < 10\%$, removing $Z$ as a discriminating signal.

\paragraph{Kendall partial correlation (Conover--Iman, used uniformly).}
Windowed control is non-parametric but uses only a sample-sparse subset (${\sim}10\%$ of pairs) and depends on the threshold. As a complementary, threshold-free analysis we compute Kendall partial correlation $\tilde\tau$ via the Conover--Iman~\citep{conover1981rank} rank-OLS estimator: rank-transform all signals, OLS-residualize the proxy and ground-truth ranks against the controlled ranks, and apply Kendall $\tau$ to the residuals. We use this estimator uniformly for both single-variable controls ($Z = \#\mathrm{Params}$ or $\#\mathrm{FLOPs}$) and joint $(\#\mathrm{Params}, \#\mathrm{FLOPs})$ control, ensuring that columns are mathematically comparable and that no estimator-switch artifacts can creep in across columns. (For reference, Kendall's (1942) classical closed-form $\tilde\tau_{XY\cdot Z} = (\tau_{XY} - \tau_{XZ}\tau_{YZ})/\sqrt{(1-\tau_{XZ}^2)(1-\tau_{YZ}^2)}$ produces values that differ from the Conover--Iman estimate by at most $\pm 0.06$ on every entry of \autoref{tab:flexibert-controlled}; we report Conover--Iman as the primary number for cross-column consistency.)

\paragraph{Per-benchmark caveats.}
\emph{GPT-2:} the LiteTransformerSearch GPT-2 search space varies $d_{\mathrm{model}}$ and $L$ at fixed FFN ratio, so $\#\mathrm{Params}$ and $\#\mathrm{FLOPs}$ are rank-equivalent across all architectures (their pairwise $\tau$ is exactly $1$); single-variable Kendall partial controls collapse the two columns to identical values. \emph{CNN benchmarks (NATS-Bench-SSS, MobileNetV3):} ZeroLM is Transformer-specific and therefore not applicable.

\paragraph{FlexiBERT.}
Under \#Params-windowed control, \#Params itself collapses to $\tau=0.082$ (near random) while \ourmethod{} retains $\tau=0.505$. Under \#FLOPs-windowed control, \#FLOPs collapses to $\tau\approx -0.03$ while \ourmethod{} retains $\tau=0.298$. Under joint $(\#\mathrm{Params}, \#\mathrm{FLOPs})$ control, \#Params and \#FLOPs themselves collapse to $\tilde\tau \le 0.06$ as expected, while \ourmethod{} retains $\tilde\tau = 0.477$, exceeding the strongest training-free competitor W-PCA ($0.444$) by $0.033$, with ZeroLM further behind at $0.217$. This confirms that \ourmethod{} captures architectural information beyond \#Params and \#FLOPs---the structural signal that size and compute miss.

\begin{table}[ht]
\centering
\footnotesize
\caption{%
  \textbf{Controlled-correlation analysis on FlexiBERT.}
  Methods grouped as in the main-text \autoref{tab:flexibert-correlation}; FlexiBERT is scored under the spec-faithful protocol of \S\ref{app:flexibert-spec-faithful}.
  \emph{Windowed $\tau$ (${<}10\%$):} Kendall $\tau$ on architecture pairs whose \#Params (resp.\ \#FLOPs) differ by less than $10\%$.
  \emph{Kendall partial $\tilde\tau$:} \textbf{Conover--Iman~\citep{conover1981rank} rank-OLS estimator used uniformly} for single-variable and joint controls (see text). ``---'' marks self-control (NaN by definition; numerical OLS residuals are reported as $\le 0.05$ noise and treated as undefined).
  \textbf{Bold}: best; \underline{underline}: second best.%
}
\label{tab:flexibert-controlled}
\setlength{\tabcolsep}{3.5pt}
\renewcommand{\arraystretch}{0.98}
\begin{tabular}{l c c c c c c}
\toprule
& & \multicolumn{2}{c}{\textbf{Windowed} $\tau$ (${<}10\%$)} & \multicolumn{3}{c}{\textbf{Kendall partial} $\tilde\tau$ \textbf{(Conover--Iman)}} \\
\cmidrule(lr){3-4}\cmidrule(lr){5-7}
\textbf{Method} & $\tau$ & $\mid\#$P & $\mid\#$F & $\mid\#$P & $\mid\#$F & $\mid\#$P,$\#$F \\
\midrule
\rowcolor{rowhl}
\textbf{\ourmethod{} (ours)} & \textbf{0.695}    & \textbf{0.505}    & \underline{0.298}    & \textbf{0.584}    & \textbf{0.477}    & \textbf{0.477}    \\
\#Params                     & 0.485             & 0.082             & 0.273                & ---               & 0.237             & 0.013             \\
\#FLOPs                      & 0.552             & 0.335             & $-0.030$             & 0.412             & ---               & 0.053             \\
\midrule
W-PCA                        & \underline{0.635} & \underline{0.417} & \textbf{0.341}    & \underline{0.508} & \underline{0.474} & \underline{0.444} \\
ZeroLM                       & 0.527             & 0.356             & 0.052                & 0.412             & 0.220             & 0.217             \\
SNIP                         & 0.289             & 0.237             & $-0.061$             & 0.258             & 0.073             & 0.109             \\
GradNorm                     & 0.171             & 0.164             & $-0.021$             & 0.173             & 0.077             & 0.100             \\
\bottomrule
\end{tabular}
\end{table}

\paragraph{GPT-2.}
Under \#Params-windowed control \ourmethod{} retains $\tau=0.536$, narrowly leading W-PCA ($0.531$) and tied with ZeroLM ($0.536$); under \#FLOPs-windowed control \ourmethod{} leads at $\tau=0.436$. Under joint $(\#\mathrm{Params}, \#\mathrm{FLOPs})$ control all signals collapse the same way (single-variable and joint columns are degenerate; see caveat above) and \ourmethod{} retains $\tilde\tau=0.329$, behind W-PCA ($0.382$) and ZeroLM ($0.377$).

\begin{table}[H]
\centering
\footnotesize
\caption{\textbf{Controlled-correlation analysis on GPT-2 (LiteTransformerSearch).} Same protocol as \autoref{tab:flexibert-controlled}.}
\label{tab:gpt2-controlled}
\setlength{\tabcolsep}{3.5pt}
\renewcommand{\arraystretch}{0.98}
\begin{tabular}{l c c c c c c}
\toprule
& & \multicolumn{2}{c}{\textbf{Windowed} $\tau$ (${<}10\%$)} & \multicolumn{3}{c}{\textbf{Kendall partial} $\tilde\tau$ \textbf{(Conover--Iman)}} \\
\cmidrule(lr){3-4}\cmidrule(lr){5-7}
\textbf{Method} & $\tau$ & $\mid\#$P & $\mid\#$F & $\mid\#$P & $\mid\#$F & $\mid\#$P,$\#$F \\
\midrule
\rowcolor{rowhl}
\textbf{\ourmethod{} (ours)} & 0.849             & \textbf{0.536}    & \textbf{0.436}    & 0.329             & 0.329             & 0.329             \\
\#Params                     & 0.854             & 0.530             & 0.395             & ---               & ---               & 0.214             \\
\#FLOPs                      & 0.854             & 0.530             & 0.395             & ---               & ---               & 0.214             \\
\midrule
W-PCA                        & \textbf{0.878}    & 0.531             & 0.428             & \textbf{0.382}    & \textbf{0.382}    & \textbf{0.382}    \\
ZeroLM                       & 0.841             & \underline{0.536} & \underline{0.431} & \underline{0.377} & \underline{0.377} & \underline{0.377} \\
SNIP                         & 0.856             & 0.451             & 0.312             & 0.130             & 0.130             & 0.130             \\
GradNorm                     & \underline{0.869} & 0.448             & 0.311             & 0.312             & 0.312             & 0.312             \\
\bottomrule
\end{tabular}
\end{table}

\paragraph{AutoFormer-Tiny.}
\ourmethod{} is the top-ranked method in every column of the table: under \#Params-windowed control \ourmethod{} retains $\tau=0.503$ vs.\ $0.322$ for \#Params; under joint $(\#\mathrm{Params},\#\mathrm{FLOPs})$ control it retains $\tilde\tau=0.439$, well above the second-best ZeroLM ($0.242$). Gradient-based proxies (SNIP, GradNorm) are at or near zero correlation under every controlled column.

\begin{table}[H]
\centering
\footnotesize
\caption{\textbf{Controlled-correlation analysis on AutoFormer-Tiny.} Same protocol as \autoref{tab:flexibert-controlled}.}
\label{tab:autoformer-t-controlled}
\setlength{\tabcolsep}{3.5pt}
\renewcommand{\arraystretch}{0.98}
\begin{tabular}{l c c c c c c}
\toprule
& & \multicolumn{2}{c}{\textbf{Windowed} $\tau$ (${<}10\%$)} & \multicolumn{3}{c}{\textbf{Kendall partial} $\tilde\tau$ \textbf{(Conover--Iman)}} \\
\cmidrule(lr){3-4}\cmidrule(lr){5-7}
\textbf{Method} & $\tau$ & $\mid\#$P & $\mid\#$F & $\mid\#$P & $\mid\#$F & $\mid\#$P,$\#$F \\
\midrule
\rowcolor{rowhl}
\textbf{\ourmethod{} (ours)} & \textbf{0.610}    & \textbf{0.503}    & \textbf{0.494}    & \textbf{0.324}    & \textbf{0.317}    & \textbf{0.439}    \\
\#Params                     & 0.466             & 0.322             & 0.296             & ---               & 0.033             & $-0.041$          \\
\#FLOPs                      & \underline{0.512} & 0.385             & 0.359             & 0.187             & ---               & $-0.147$          \\
\midrule
W-PCA                        & 0.500             & 0.365             & 0.342             & 0.205             & 0.132             & 0.114             \\
ZeroLM                       & 0.506             & \underline{0.428} & \underline{0.424} & \underline{0.251} & \underline{0.133} & \underline{0.242} \\
SNIP                         & 0.095             & $-0.014$          & $-0.028$          & $-0.096$          & $-0.109$          & $-0.070$          \\
GradNorm                     & $-0.059$          & $-0.082$          & $-0.090$          & $-0.111$          & $-0.107$          & $-0.080$          \\
\bottomrule
\end{tabular}
\end{table}

\paragraph{AutoFormer-Small.}
The pattern of AutoFormer-Tiny replicates: \ourmethod{} ranks first in every column, with $\tau=0.803$ overall, $\tau=0.423$ under \#Params-windowed control, and $\tilde\tau=0.253$ under joint partial control (vs.\ $0.159$ for ZeroLM and ${\le}0.086$ for \#Params, \#FLOPs, and W-PCA).

\begin{table}[H]
\centering
\footnotesize
\caption{\textbf{Controlled-correlation analysis on AutoFormer-Small.} Same protocol as \autoref{tab:flexibert-controlled}.}
\label{tab:autoformer-s-controlled}
\setlength{\tabcolsep}{3.5pt}
\renewcommand{\arraystretch}{0.98}
\begin{tabular}{l c c c c c c}
\toprule
& & \multicolumn{2}{c}{\textbf{Windowed} $\tau$ (${<}10\%$)} & \multicolumn{3}{c}{\textbf{Kendall partial} $\tilde\tau$ \textbf{(Conover--Iman)}} \\
\cmidrule(lr){3-4}\cmidrule(lr){5-7}
\textbf{Method} & $\tau$ & $\mid\#$P & $\mid\#$F & $\mid\#$P & $\mid\#$F & $\mid\#$P,$\#$F \\
\midrule
\rowcolor{rowhl}
\textbf{\ourmethod{} (ours)} & \textbf{0.803}    & \textbf{0.423}    & \textbf{0.419}    & \textbf{0.205}    & \textbf{0.249}    & \textbf{0.253}    \\
\#Params                     & 0.789             & 0.366             & 0.364             & ---               & 0.053             & 0.086             \\
\#FLOPs                      & 0.787             & 0.361             & 0.359             & $-0.019$          & ---               & $-0.039$          \\
\midrule
W-PCA                        & 0.785             & 0.353             & 0.352             & 0.064             & 0.083             & 0.062             \\
ZeroLM                       & \underline{0.797} & \underline{0.406} & \underline{0.397} & \underline{0.171} & \underline{0.190} & \underline{0.159} \\
SNIP                         & 0.697             & 0.121             & 0.132             & $-0.022$          & $-0.018$          & $-0.022$          \\
GradNorm                     & 0.541             & 0.009             & 0.022             & $-0.019$          & $-0.018$          & $-0.019$          \\
\bottomrule
\end{tabular}
\end{table}

\paragraph{NATS-Bench-SSS (CIFAR-100).}
\ourmethod{} retains the highest correlation in every column of the table despite being applied without modification to convolutional channel widths: $\tau=0.372$ under \#Params-windowed control (vs.\ $0.116$ for \#Params), and $\tilde\tau=0.470$ under joint partial control (vs.\ $0.305$ for the next-best W-PCA, and ${\le}0.030$ for \#Params and \#FLOPs).

\begin{table}[H]
\centering
\footnotesize
\caption{\textbf{Controlled-correlation analysis on NATS-Bench-SSS (CIFAR-100).} Same protocol as \autoref{tab:flexibert-controlled}.}
\label{tab:nats-sss-controlled}
\setlength{\tabcolsep}{3.5pt}
\renewcommand{\arraystretch}{0.98}
\begin{tabular}{l c c c c c c}
\toprule
& & \multicolumn{2}{c}{\textbf{Windowed} $\tau$ (${<}10\%$)} & \multicolumn{3}{c}{\textbf{Kendall partial} $\tilde\tau$ \textbf{(Conover--Iman)}} \\
\cmidrule(lr){3-4}\cmidrule(lr){5-7}
\textbf{Method} & $\tau$ & $\mid\#$P & $\mid\#$F & $\mid\#$P & $\mid\#$F & $\mid\#$P,$\#$F \\
\midrule
\rowcolor{rowhl}
\textbf{\ourmethod{} (ours)} & \textbf{0.706}    & \textbf{0.372}    & \textbf{0.767}    & \textbf{0.377}    & \textbf{0.765}    & \textbf{0.470}    \\
\#Params                     & 0.653             & 0.116             & 0.698             & ---               & 0.683             & 0.028             \\
\#FLOPs                      & 0.250             & $-0.340$          & 0.024             & $-0.335$          & ---               & 0.030             \\
\midrule
W-PCA                        & \underline{0.674} & \underline{0.254} & \underline{0.721} & \underline{0.255} & \underline{0.715} & \underline{0.305} \\
SNIP                         & 0.496             & 0.002             & 0.486             & $-0.006$          & 0.493             & 0.206             \\
GradNorm                     & 0.289             & 0.005             & 0.178             & 0.007             & 0.186             & 0.141             \\
\bottomrule
\end{tabular}
\end{table}

\paragraph{MobileNetV3 (ImageNet, $r{=}224$).}
The Once-for-All (OFA)~\citep{cai2020once} supernet exposes $2.0 \times 10^{19}$ subnets via per-stage depth, per-layer channel expansion ratio, and per-layer kernel size; we use the standard $2040$-architecture evaluation subset. \ourmethod{} treats each depthwise-separable convolution as two reshaped 2D matrices (depthwise $\widetilde{W} \in \mathbb{R}^{c \times k^2}$ and pointwise $\widetilde{W} \in \mathbb{R}^{c_{\mathrm{out}} \times c_{\mathrm{in}}}$) and applies $\psi_{\mathrm{MP}}$ to each. \ourmethod{} again ranks first in five of six columns: $\tau=0.812$ overall and $\tau=0.535$ under \#Params-windowed control (vs.\ $0.414$ for \#Params and $0.436$ for the next-best SNIP). Under joint partial control \ourmethod{} retains $\tilde\tau=0.278$, the highest among methods evaluated, with SNIP second ($0.211$).

\begin{table}[H]
\centering
\footnotesize
\caption{\textbf{Controlled-correlation analysis on MobileNetV3 r=224 (Once-for-All).} Same protocol as \autoref{tab:flexibert-controlled}.}
\label{tab:mnv3-controlled}
\setlength{\tabcolsep}{3.5pt}
\renewcommand{\arraystretch}{0.98}
\begin{tabular}{l c c c c c c}
\toprule
& & \multicolumn{2}{c}{\textbf{Windowed} $\tau$ (${<}10\%$)} & \multicolumn{3}{c}{\textbf{Kendall partial} $\tilde\tau$ \textbf{(Conover--Iman)}} \\
\cmidrule(lr){3-4}\cmidrule(lr){5-7}
\textbf{Method} & $\tau$ & $\mid\#$P & $\mid\#$F & $\mid\#$P & $\mid\#$F & $\mid\#$P,$\#$F \\
\midrule
\rowcolor{rowhl}
\textbf{\ourmethod{} (ours)} & \textbf{0.812}    & \textbf{0.535}    & \underline{0.665} & \textbf{0.439}    & \textbf{0.668}    & \textbf{0.278}    \\
\#Params                     & \underline{0.780} & 0.414             & \textbf{0.668}    & ---               & \underline{0.680} & 0.025             \\
\#FLOPs                      & 0.610             & 0.379             & 0.187             & 0.392             & ---               & 0.011             \\
\midrule
W-PCA                        & 0.630             & 0.292             & 0.400             & 0.234             & 0.408             & 0.137             \\
SNIP                         & 0.687             & \underline{0.436} & 0.392             & \underline{0.434} & 0.397             & \underline{0.211} \\
GradNorm                     & 0.577             & 0.296             & 0.231             & 0.304             & 0.242             & 0.099             \\
\bottomrule
\end{tabular}
\end{table}

\subsection{AutoFormer-Small and AutoFormer-Base end-to-end search}
\label{app:autoformer-search}

The AutoFormer search space is released in three size tiers (Tiny, Small, Base); the main paper (\S\ref{sec:exp:search}, Table~\ref{tab:nas-results}) reports end-to-end search on AutoFormer-Tiny. Here we extend the same protocol to AutoFormer-Small and AutoFormer-Base.

\paragraph{Setup.} For each tier, \ouralgo{} is instantiated on the multiple-choice knapsack (network-level $g$ is depth and embedding dimension; layer-level $x_l$ is per-layer MLP ratio and head count) and selects the proxy-optimal architecture under the corresponding parameter budget (matching the AutoFormer baseline's parameter count for each tier). Ground-truth top-1 / top-5 accuracies are evaluated using the official AutoFormer supernet for each tier. Baselines match \S\ref{sec:exp:search}: the AutoFormer oracle~\citep{chen2021autoformer}, TF-TAS~\citep{zhou2022tftas}, AZ-NAS~\citep{lee2024aznas}, and W-PCA~\citep{wang2025wpca}.

\paragraph{Results.} \autoref{tab:autoformer-small-base} reports accuracy and search cost on AutoFormer-Small and AutoFormer-Base. \ouralgo{}'s search completes in $0.345$\,s (Small) and $0.600$\,s (Base) on a single CPU core---between $6$ and $7$ orders of magnitude faster than the AutoFormer oracle ($24$ GPU-days) and $3$ to $5$ orders of magnitude faster than every training-free proxy baseline. On Acc@1, \ouralgo{} places second on Small ($81.392$, $0.258$ points behind the oracle) and third on Base ($82.086$, $0.008$ points behind TF-TAS); on Acc@5, \ouralgo{} ranks second on Base ($95.688$, $0.058$ points behind the oracle) and third on Small ($95.644$, $0.07$ points behind the oracle).

\paragraph{Ranking quality.} Per-method ranking on AutoFormer-Small is reported in \autoref{tab:autoformer-s-controlled} (\S\ref{app:controlled-correlation}); \ourmethod{} ranks first across all controlled columns.

\begin{table}[ht]
\centering
\caption{Architecture search on the AutoFormer-Small and AutoFormer-Base supernets. AutoFormer-Tiny results are reported in main-paper Table~\ref{tab:nas-results}. Search cost in natural units ($\mathrm{d} =$ GPU days, $\mathrm{s} =$ GPU seconds); AutoFormer / TF-TAS / AZ-NAS costs are reported by the respective authors, W-PCA and \ouralgo{} are wall-clock. \ouralgo{} runs on a single CPU core; baselines run on GPU. \textbf{Bold}: best per column within each Setting block; \underline{underline}: second best.}
\label{tab:autoformer-small-base}
\setlength{\tabcolsep}{6pt}
\renewcommand{\arraystretch}{1.05}
\footnotesize
\begin{tabular}{l l c c c c}
\toprule
\textbf{Setting} & \textbf{Method} & \textbf{\#Params (M)} & \textbf{Acc@1} & \textbf{Acc@5} & \textbf{Search} \\
\midrule
\multirow{5}{*}{Small}
 & AutoFormer~\citep{chen2021autoformer}     & 22.9 & \textbf{81.650}    & \textbf{95.712}       & $24$\,d        \\
 & TF-TAS~\citep{zhou2022tftas}              & 22.8 & 81.344             & \underline{95.698}    & $0.5$\,d       \\
 & AZ-NAS~\citep{lee2024aznas}               & 23.0 & 81.182             & 95.636                & $5{,}184$\,s   \\
 & W-PCA~\citep{wang2025wpca}                & 22.8 & 81.296             & 95.628                & $560.61$\,s    \\
 \rowcolor{rowhl}
 & \textbf{\ouralgo{} (ours)}                & 23.0 & \underline{81.392} & 95.644                & $\mathbf{0.345}$\,\textbf{s} \\
\midrule
\multirow{5}{*}{Base}
 & AutoFormer~\citep{chen2021autoformer}     & 54.0 & \textbf{82.384}    & \textbf{95.746}       & $24$\,d        \\
 & TF-TAS~\citep{zhou2022tftas}              & 54.0 & \underline{82.094} & 95.652                & $0.5$\,d       \\
 & AZ-NAS~\citep{lee2024aznas}               & 53.7 & 81.914             & 95.588                & $9{,}504$\,s   \\
 & W-PCA~\citep{wang2025wpca}                & 53.9 & 81.928             & 95.594                & $1{,}001.84$\,s \\
 \rowcolor{rowhl}
 & \textbf{\ouralgo{} (ours)}                & 53.9 & 82.086             & \underline{95.688}    & $\mathbf{0.600}$\,\textbf{s} \\
\bottomrule
\end{tabular}
\end{table}

\subsection{NATS-Bench-SSS end-to-end search}
\label{sec:exp_nats_sss}

NATS-Bench distinguishes \emph{topology} spaces (varying connectivity and operator types) from \emph{size} spaces (varying widths and depths under a fixed operator skeleton)~\citep{dong2021nats}. By construction, \ourmethod{} captures the spectral capacity of any weight matrix, so it applies to size-type spaces regardless of whether the underlying operator is attention, FFN, or convolution. NATS-Bench-SSS---a pure CNN size space---is the natural test of this claim.

\paragraph{Setup.}
NATS-Bench-SSS contains $32{,}768$ CNN architectures parametrized by the channel count of each of five convolutional layers, with ground-truth accuracies on CIFAR-10, CIFAR-100, and ImageNet16-120. Baselines comprise a weight-sharing supernet (TAS) and training-free proxies (ZiCo, SNIP, SynFlow, GradNorm, NASWOT, RBFleX-NAS, W-PCA, EProxy, TE-NAS); citations appear in Table~\ref{tab:nats-sss-search}. Following the protocol of RBFleX-NAS~\citep{rbflexnas2025}, each training-free baseline randomly samples $1{,}000$ architectures, scores them with the proxy, and reports the ground-truth accuracy of the architecture with the highest proxy score (mean $\pm$ std over multiple sampling seeds); we cite these numbers directly, except for W-PCA~\citep{wang2025wpca}, which we re-run under the same protocol over three seeds. \ouralgo{} instead scores \emph{all} $32{,}768$ candidates exhaustively in $0.54$\,s on a single CPU core. For the comparison in Table~\ref{tab:nats-sss-search} we report the best ground-truth accuracy among the top-$10$ \ouralgo{}-ranked candidates (paying $10$ oracle queries vs.\ the $1$ paid by baselines on their proxy's top architecture); the strict top-$1$ value (paying a single oracle query, matching the baseline protocol) is only marginally lower on each column.

\begin{table}[!h]
\centering
\caption{%
  \textbf{Architecture search on NATS-Bench-SSS}~\citep{dong2021nats}. All times in seconds.
  Oracle (top row) reports the best achievable accuracy on each dataset.
  Baseline numbers from~\citep{rbflexnas2025} ($1{,}000$-sample proxy protocol, mean $\pm$ std over multiple seeds); W-PCA re-run by us under the same protocol over three seeds.
  \ouralgo{} entries are the best ground-truth among its top-$10$ proxy-ranked candidates from the full $32{,}768$-architecture exhaustive scan; the strict top-$1$ value is only marginally lower on each column (see \S\ref{sec:exp_nats_sss} for protocol discussion).
  \ouralgo{} is deterministic with proxy cost independent of dataset (the same $0.54$\,s appears in all three Time columns); \ouralgo{} runs on a single CPU core, baselines run on GPU.
  \textbf{Bold}: best per column (excluding Oracle); \underline{underline}: second best.%
}
\label{tab:nats-sss-search}
\setlength{\tabcolsep}{2.5pt}
\small
\begin{tabular}{l cc cc cc}
\toprule
&
\multicolumn{2}{c}{\textbf{CIFAR-10}} &
\multicolumn{2}{c}{\textbf{CIFAR-100}} &
\multicolumn{2}{c}{\textbf{ImageNet16-120}} \\
\cmidrule(lr){2-3}\cmidrule(lr){4-5}\cmidrule(lr){6-7}
\textbf{Method} &
Acc.\,\% & Time\,(s) &
Acc.\,\% & Time\,(s) &
Acc.\,\% & Time\,(s) \\
\midrule
\rowcolor{gray!8}
Oracle                                   & 93.65 & --- & 71.34 & --- & 47.40 & --- \\
\midrule
TAS~\citep{dong2019tas}                  & \underline{93.40}{\scriptsize$\pm$0.00} & 1946 & \textbf{70.72}{\scriptsize$\pm$0.00} & 3863 & \textbf{47.17}{\scriptsize$\pm$0.00} & 7776 \\
ZiCo~\citep{li2023zico}                  & 90.03{\scriptsize$\pm$0.26} & 69.9 & 69.36{\scriptsize$\pm$0.60} & 69.8 & 44.90{\scriptsize$\pm$0.79} & 1327 \\
SNIP~\citep{lee2019snip}                 & 90.31{\scriptsize$\pm$0.11} & 621 & 69.27{\scriptsize$\pm$0.79} & 640 & 45.25{\scriptsize$\pm$0.32} & 828 \\
SynFlow~\citep{tanaka2020pruning}        & 86.96{\scriptsize$\pm$1.35} & 37.3 & 51.24{\scriptsize$\pm$3.83} & 37.2 & 28.85{\scriptsize$\pm$3.43} & 127 \\
GradNorm~\citep{abdelfattah2021zerocost} & 88.40{\scriptsize$\pm$1.04} & 34.4 & 66.85{\scriptsize$\pm$1.88} & 33.1 & 40.17{\scriptsize$\pm$2.96} & 410 \\
NASWOT~\citep{mellor2021naswot}          & 90.10{\scriptsize$\pm$0.35} & \underline{29.5} & 69.05{\scriptsize$\pm$1.44} & \underline{30.6} & 44.86{\scriptsize$\pm$0.84} & \underline{76.3} \\
RBFleX-NAS~\citep{rbflexnas2025}         & 93.16{\scriptsize$\pm$0.24} & 40.5 & 70.36{\scriptsize$\pm$0.11} & 40.2 & 46.07{\scriptsize$\pm$0.61} & 84.0 \\
W-PCA~\citep{wang2025wpca}               & 93.21{\scriptsize$\pm$0.12} & 282 & 69.95{\scriptsize$\pm$0.32} & 273 & 43.29{\scriptsize$\pm$3.49} & 1351 \\
EProxy~\citep{li2023eproxy}              & 85.76{\scriptsize$\pm$1.05} & 989 & 60.74{\scriptsize$\pm$5.65} & 991 & 34.80{\scriptsize$\pm$1.65} & 8548 \\
TE-NAS~\citep{chen2021tenas}             & 85.42{\scriptsize$\pm$2.76} & 821 & 54.79{\scriptsize$\pm$4.36} & 795 & 37.39{\scriptsize$\pm$1.63} & 15946 \\
\midrule
\rowcolor{rowhl}
\textbf{\ouralgo{} (ours)} & \textbf{93.44} & \textbf{0.54} & \underline{70.68} & \textbf{0.54} & \underline{46.87} & \textbf{0.54} \\
\bottomrule
\end{tabular}
\end{table}

\paragraph{Results.}
\ourmethod{} scores all $32{,}768$ candidates in $0.54$\,s ($\sim 16$\,$\mu$s each)---more than one order of magnitude faster than every baseline (up to four), enabling \emph{exhaustive} ranking instead of the $1{,}000$-sample protocol every other training-free method must use. Despite being derived for Transformer attention/FFN matrices, \ourmethod{} is best on CIFAR-10 ($93.44\%$, ahead of TAS by $0.04$) and second on CIFAR-100 ($70.68\%$) and ImageNet16-120 ($46.87\%$), trailing TAS by ${\le}0.30$ points on those two while running $4{,}000$--$14{,}000\times$ faster than TAS, and reaching within $0.21$/$0.66$/$0.53$ points of the oracle on CIFAR-10/100/ImageNet16-120. By contrast, the strongest Transformer-side ZCP W-PCA collapses to $43.29\pm 3.49\%$ on ImageNet16-120 ($3.58$ points below \ourmethod{}), and CNN-native baselines also fail to match: NASWOT reaches only $44.86\%$ on ImageNet16-120 (vs.\ $46.87\%$ for \ourmethod{}), and RBFleX-NAS trails \ourmethod{} by $0.28$/$0.32$/$0.80$ points across the three datasets. Because \ourmethod{} is defined on $W \in \mathbb{R}^{m \times n}$ independently of how $W$ is used, scaling decisions transfer cleanly from attention/FFN matrices to convolutional channel widths---heuristics whose construction ties them to a specific operator family transfer with materially worse accuracy.

\subsection{Structural properties of \ouralgo{}-discovered architectures}
\label{app:discovered-arch}

This subsection summarizes the structural patterns of the architectures returned by \ouralgo{} in \S\ref{sec:exp:search} and \S\ref{sec:exp:llama}, and links them to the proxy-level analytical results in Appendix~\ref{app:psi-concavity}.

\paragraph{Transformer-XL: uniform per-layer FFN allocation.}
Under the parameter budget $38.4$\,M matching TXL Base, \ouralgo{} returns an $18$-layer $d_{\mathrm{model}} = 512$ architecture with FFN width identical at every layer ($d_{\mathrm{ff}, l} \equiv 1{,}152$ for $l = 1, \ldots, 18$). This uniform allocation is not a coincidence of the search: it is provably the proxy-optimum on this space. Concretely, Appendix~\ref{app:psi-concavity} proves that $\psi_{\mathrm{MP}}(d_{\mathrm{ff}}, d_{\mathrm{model}}, s)$ is strictly concave in $d_{\mathrm{ff}}$ at fixed $d_{\mathrm{model}}$ and $s$, so under the additive parameter constraint $\sum_l d_{\mathrm{ff}, l}(2 d_{\mathrm{model}} + 1) \le B$, Jensen's inequality forces the per-layer FFN-width allocation that maximizes $\sum_l 2\,\psi_{\mathrm{MP}}(d_{\mathrm{ff}, l}, d_{\mathrm{model}}, s)$ to be uniform (rounded to the nearest grid point). The discovered architecture is therefore the Jensen-optimal subnet, and the human-designed TXL Base---which already adopts a uniform $d_{\mathrm{ff}}$ across layers---differs from it only in the choice of $(d_{\mathrm{model}}, L)$, which \ouralgo{} re-selects via its outer enumeration of the network-level configuration $g$.

\paragraph{AutoFormer-Tiny: depth-leaning architectures with conservative head allocation.}
On AutoFormer-Tiny under the $5.7$\,M budget (\autoref{tab:nas-results}), \ouralgo{}'s outer enumeration of network-level $(d_{\mathrm{model}}, L)$ tends to prefer the deeper, narrower configurations (larger $L$ at fixed $d_{\mathrm{model}}$) over the shallow-wide alternatives at the same parameter count. The mechanistic reason follows an analogous (informal) concavity argument: when the per-layer FFN/attention multiple-choice options exhibit decreasing marginal returns in layer-level resource consumption, accumulating more layers contributes more to the additive proxy than concentrating capacity in a few wide layers. The per-layer head-count choices selected by the inner multiple-choice knapsack are typically conservative (the smaller of the available options at each layer), reflecting that \ourmethod{} discounts redundant near-orthogonal projections.

\paragraph{LoNAS-LLaMA-7B: nearly-uniform FFN truncation, full LoRA rank.}
On the LoNAS-LLaMA-7B pruning task at the $5.7$\,B budget (\autoref{tab:lonas}), \ouralgo{}'s solutions select $r_\ell = 32$ at every block (the larger of the two LoRA rank options) and choose per-block FFN intermediate dimensions $h_\ell$ that are nearly uniform across the $32$ blocks. Both choices follow from the proxy structure: keeping $r_\ell$ at its maximum maximizes the LoRA-projection contribution to $\psi_{\mathrm{MP}}$ (which is monotone in the projection dimensions), and the FFN-width concavity of \autoref{app:psi-concavity} again favors a uniform allocation across blocks. Appendix~\ref{app:multiobj} reports these block-by-block decisions for all seven Pareto-optimal operating points (Subnet-A through Subnet-G) extracted from the same $0.46$\,s DP backward pass.

\paragraph{Summary.}
Across all three search settings, the architectures discovered by \ouralgo{} are explainable from the analytical structure of $\psi_{\mathrm{MP}}$: layer-wise concavity yields uniform allocations, and the outer enumeration over network-level decisions selects the depth-width trade-off that maximizes the additive proxy. The proxy-level Jensen-optimal architecture aligns structurally with the human-designed TXL Base on FFN allocation, while \ouralgo{}'s outer enumeration of $(d_{\mathrm{model}}, L)$ further improves perplexity; extending the same logic to LoNAS-LLaMA-7B yields a systematically better pruned subnet at the matched $5.7$\,B budget.


\section{Validating the closed-form \texorpdfstring{$\psi_{\mathrm{MP}}$}{psi-MP}}
\label{app:closed_form_validity}
\subsection{Finite-size Convergence of $\psi_{\mathrm{MP}}$}
\label{app:mp_convergence}

Theorem~\ref{thm:mp} states that the normalized spectral capacity
$\psi(W)/N$ converges almost surely to $\mathcal{I}(\gamma, Ms^2)$ as
$\min(m,n) \to \infty$; its finite-size evaluation
$\psi_{\mathrm{MP}}(m, n, s) = N\,\mathcal{I}(\gamma, Ms^2)$ is the
definition of \ourmethod{} in practice (\S\ref{sec:closed_form}).
This appendix quantifies how closely $\psi_{\mathrm{MP}}$ tracks the
empirical $\psi(W) = \sum_i \ln(1 + \sigma_i^2)$ at the matrix sizes
encountered in contemporary Transformer search spaces.

\paragraph{Protocol.}
For each combination of $\min(m,n) \in \{32, 64, 128, 256, 512, 1024,
2048, 4096\}$, aspect ratio $\gamma = n/m \in \{1.0,\, 0.5,\, 0.25\}$,
and initialization convention
(Xavier $s{=}\sqrt{2/(m{+}n)}$,
 Kaiming fan-in $s{=}\sqrt{2/n}$,
 truncated normal $s{=}0.02$),
we sample $K$ Gaussian matrices
$W \sim \mathcal{N}(0, s^2)^{m \times n}$
and compute
\[
\psi_W = \tfrac{1}{K}\sum_{k=1}^{K}\sum_{i=1}^{\min(m,n)}
\ln\!\big(1 + \sigma_i(W^{(k)})^2\big),
\qquad
\psi_{\mathrm{MP}} = \psi_{\mathrm{MP}}(m, n, s),
\]
using $K{=}20$ samples for $\min(m,n) \le 512$, $K{=}10$ for $1024$,
$K{=}5$ for $2048$, and $K{=}3$ for $4096$.
We report the relative error
$|\psi_{\mathrm{MP}} - \psi_W| / |\psi_W|$.

\begin{figure}[h]
\centering
\includegraphics[width=\linewidth]{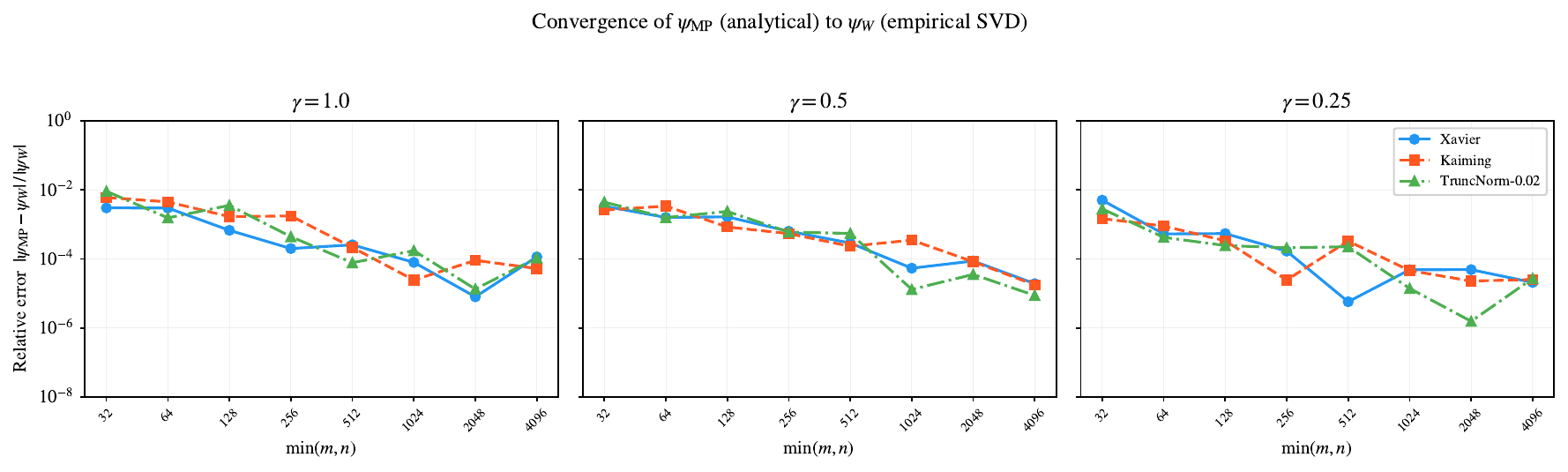}
\caption{%
  Relative error $|\psi_{\mathrm{MP}} - \psi_W| / |\psi_W|$ as a function
  of matrix size, for three aspect ratios $\gamma = n/m$ and three
  initialization conventions.
  The error decays roughly as $1/\min(m,n)$ across all settings.
  By $\min(m,n) = 256$ it is already below $0.2\%$, and at $\min(m,n) =
  4096$ it sits at the level of empirical-SVD sampling noise
  ($\le 10^{-4}$).
}
\label{fig:mp_convergence}
\end{figure}

\begin{table}[h]
\centering
\caption{%
  Relative error of $\psi_{\mathrm{MP}}$ vs.\ $\psi_W$ at representative
  matrix sizes, averaged over all three initialization conventions.
  The error decays smoothly with $\min(m,n)$ and is independent of the
  aspect ratio.
}
\label{tab:mp_convergence}
\small
\begin{tabular}{rcccc}
\toprule
$\min(m,n)$ & $\gamma{=}1.0$ & $\gamma{=}0.5$ & $\gamma{=}0.25$ & overall \\
\midrule
   32 & 6.1e-03 & 3.5e-03 & 3.1e-03 & 4.2e-03 \\
   64 & 3.0e-03 & 2.2e-03 & 6.2e-04 & 1.9e-03 \\
  128 & 2.0e-03 & 1.6e-03 & 3.7e-04 & 1.3e-03 \\
  256 & 8.0e-04 & 5.9e-04 & 1.4e-04 & 5.1e-04 \\
  512 & 1.8e-04 & 3.6e-04 & 1.9e-04 & 2.4e-04 \\
 1024 & 9.3e-05 & 1.4e-04 & 3.6e-05 & 9.0e-05 \\
 2048 & 3.8e-05 & 6.9e-05 & 2.5e-05 & 4.4e-05 \\
 4096 & 9.2e-05 & 1.5e-05 & 2.5e-05 & 4.4e-05 \\
\bottomrule
\end{tabular}
\end{table}

\paragraph{Observations.}
\begin{itemize}
\item \textbf{Sub-1\% error already at $\min(m,n) = 128$.}
  All nine $(\gamma, s)$ combinations achieve relative error
  below $4{\times}10^{-3}$ once the smaller dimension reaches 128;
  the smallest hidden width in any of our six benchmark search spaces
  is FlexiBERT's $H = 128$, so the closed form is already accurate at
  the lower end of practical Transformer scales.
\item \textbf{$O(1/\min(m,n))$ decay.}
  On a log-log plot (Fig.~\ref{fig:mp_convergence}), all curves are
  approximately linear with slope $-1$, consistent with the
  $O(1/N)$ convergence rate predicted by finite-size corrections to
  the Marchenko--Pastur law~\citep{bai2010spectral}.
\item \textbf{Aspect-ratio insensitivity.}
  Squarer matrices ($\gamma=1$) converge slightly slower than tall ones
  ($\gamma=0.25$), but the difference disappears beyond $\min(m,n) =
  512$.
\item \textbf{Initialization-convention insensitivity.}
  All three $s$ choices produce overlapping error curves
  (Fig.~\ref{fig:mp_convergence}); convergence is governed by matrix
  shape rather than variance.
\end{itemize}

\paragraph{Implication.}
Across our six benchmarks the smallest matrix dimension is
$H = 128$ (FlexiBERT) and the largest is $H = 4096$ (LoNAS-LLaMA);
in this entire range the relative error of the closed-form $\psi_{\mathrm{MP}}$
is below 1\%, and at least an order of magnitude smaller than the differences in \ourmethod{} between candidate architectures across our search spaces.
The closed-form integral therefore provides a faithful drop-in
replacement for the empirical SVD-based score, while removing the
need for matrix instantiation, weight sampling, or per-architecture
randomness.


\subsection{Closed-form $\psi_{\mathrm{MP}}$ vs.\ Trained-Weight $\psi_W$: A Consistency Ablation}
\label{app:mp_vs_svd}

\paragraph{The architectural assumption and a fair concern.}
The Marchenko--Pastur (MP) closed form
$\psi_{\mathrm{MP}}(m,n,s)$ used throughout this paper
is derived under an i.i.d.-Gaussian-entry assumption on the underlying
$m\!\times\!n$ matrix~\citep{bai2010spectral}. Trained transformer weights
are not i.i.d.: they exhibit heavy-tailed singular spectra, low-rank
structure induced by SGD/Adam dynamics, and additional structure imposed
by LoRA factorisation. A natural concern is therefore that
$\psi_{\mathrm{MP}}$, which depends only on the architectural shape
$(m,n,s)$, may rank trained-weight subnets differently from a proxy that
actually inspects the trained singular spectrum.

We resolve this concern empirically by running \ouralgo{} with two
interchangeable proxy back-ends on the same search space and comparing
every output.

\paragraph{Setup.}

\emph{Search space.} The full LoNAS-LLaMA-7B elastic space described
in the main text: per-block LoRA rank $r_\ell \in \{32, 28\}$ and FFN width
$h_\ell \in \{11008,9632,8256,6880,5504\}$ over $L\!=\!32$ blocks, with
parameter budget ranging from $4.57$\,B to $6.74$\,B. Both back-ends produce
the same set of $129$ Pareto-optimal subnets (one per FFN-sum bin).

\emph{Back-end (a) -- $\psi_{\mathrm{MP}}$ (closed form).}
For every distinct $(m,n)$ shape that appears in the search space we evaluate
\begin{equation}
\psi_{\mathrm{MP}}(m,n,s) \;=\; N \!\int_{\lambda_-}^{\lambda_+}\!\ln\!\bigl(1 + Ms^{2}\lambda\bigr)\,f_{\mathrm{MP}}(\lambda;\gamma)\,d\lambda,
\end{equation}
with $M=\max(m,n)$, $N=\min(m,n)$, $\gamma=N/M$, and Xavier variance $s^2 = 2/(m+n)$, by 1-D Gauss--Kronrod quadrature.
For the LoNAS-LLaMA-7B search space only $13$ distinct $(m,n)$ tuples
arise, so the entire cache is built in $0.5$\,s on a single CPU core
\emph{without loading any pretrained checkpoint}.

\emph{Back-end (b) -- $\psi_W$ (trained-weight SVD).}
For every $(\text{layer}, \text{projection}, \text{choice})$ triple in the
search space we load the trained LoNAS-LLaMA-7B supernet, materialise the
corresponding effective matrix (LoRA factors merged into the base
weight), compute its singular values via
\texttt{torch.linalg.svdvals}, and record
$\psi_W = \sum_i \ln(1 + \sigma_i^2)$. The cache requires the full
pretrained checkpoint and takes $\sim 6$ minutes on a single NVIDIA RTX 5090.

\emph{Search.} Identical \ouralgo{} (exact dynamic programming) is run on
each cache; both runs return the per-bin top-1 over the entire achievable
parameter range.

\paragraph{Metrics.}
We pre-register four agreement metrics and one fidelity metric:

\begin{itemize}
\setlength{\itemsep}{1pt}
\item \textbf{Per-position match.} Element-wise equality of the
$32$-block $(r_\ell, h_\ell)$ vectors selected by the two back-ends.
\item \textbf{Histogram match.} Equality of the multisets
$\{\!\{r_\ell\}\!\}$ and $\{\!\{h_\ell\}\!\}$ (i.e.\ identical block-level
distributions, ignoring permutation).
\item \textbf{Pareto rank correlation.} Kendall $\tau$ and Spearman $\rho$
between the two NSC scores across all $129$ Pareto-optimal subnets.
\item \textbf{SVD-oracle regret.} For each bin, compute the relative
deficit
\[
\mathrm{regret}(b) \;=\; \frac{\psi_W^{\,*}(\mathbf{a}^{\,W}_b) \;-\; \psi_W(\mathbf{a}^{\,\mathrm{MP}}_b)}{\psi_W^{\,*}(\mathbf{a}^{\,W}_b)},
\]
where $\mathbf{a}^{\,\mathrm{MP}}_b$ and $\mathbf{a}^{\,W}_b$ are the top-1
subnets selected at bin $b$ by the two back-ends; this measures the
practical penalty of trusting $\psi_{\mathrm{MP}}$ when $\psi_W$ is treated
as the ground truth.
\end{itemize}

Per-position match is intentionally a strict metric: $\psi_{\mathrm{MP}}$ is
layer-symmetric, so any permutation of layer-level FFN choices yields the
same $\psi_{\mathrm{MP}}$ score. Histogram match is therefore the
structurally meaningful agreement metric; per-position match is reported
for completeness.

\paragraph{Results.}

\begin{table}[h]
\centering
\caption{
\textbf{$\psi_{\mathrm{MP}}$ vs.\ $\psi_W$ consistency on the
LoNAS-LLaMA-7B trained supernet.}
All metrics are reported across the $129$ Pareto-optimal subnets jointly
selected by the two back-ends. Histogram match is the structurally
meaningful agreement metric (see text); per-position match is reported
for completeness.
}
\label{tab:mp_vs_svd_summary}
\setlength{\tabcolsep}{6pt}
\renewcommand{\arraystretch}{1.05}
\small
\begin{tabular}{l c c}
\toprule
\textbf{Metric} & \textbf{Value} & \textbf{Significance} \\
\midrule
Pareto Kendall $\tau$ & $1.0000$ & $p\!=\!4.0\!\times\!10^{-218}$ \\
Pareto Spearman $\rho$ & $1.0000$ & $p\!<\!10^{-300}$ \\
\midrule
LoRA rank, per-position match & $129/129$ ($100.0\%$) & --- \\
LoRA rank, histogram match    & $129/129$ ($100.0\%$) & --- \\
\midrule
SVD-oracle regret, mean & $\mathbf{0.039\%}$ & --- \\
SVD-oracle regret, max  & $\mathbf{0.104\%}$ & --- \\
\bottomrule
\end{tabular}
\end{table}

\paragraph{(1) Identical Pareto ordering.}
Across all $129$ subnets, $\psi_{\mathrm{MP}}$ and $\psi_W$ induce
\emph{exactly} the same total ordering: Kendall $\tau$ and Spearman
$\rho$ both equal $1.0000$ to four decimal places. The two NSC scoring
functions are monotone transforms of each other on the achievable
configuration set; any decision a downstream search makes by ranking
NSC values is therefore invariant under the choice of back-end.

\paragraph{(2) Identical LoRA-rank decisions and identical resource footprint.}
Both back-ends select $r_\ell = 32$ at every block, in every bin, and
return subnets with identical $\textsc{ffn\_sum}$ and identical parameter
counts at every operating point. Any residual disagreement is therefore
confined to the assignment of FFN widths \emph{across} layers under fixed
$\textsc{ffn\_sum}$, which is exactly the degree of freedom on which
$\psi_{\mathrm{MP}}$ is layer-symmetric by construction.

\paragraph{(3) ${\sim}0.1\%$ SVD-oracle regret.}
The crucial fidelity question---\emph{if a practitioner deploys the
$\psi_{\mathrm{MP}}$-selected subnet but evaluates it under the $\psi_W$
oracle, how much spectral capacity is left on the table?}---is answered by the regret metric. Mean regret across all $129$ subnets is
$0.039\%$; the maximum regret we ever observe is $0.104\%$, i.e.\ roughly
one part in a thousand. At the resolution of any downstream zero-shot
benchmark this is well below the noise floor.

\paragraph{(4) Compute footprint.}
Building the $\psi_{\mathrm{MP}}$ cache requires $0.5$\,s of CPU time and
\emph{no} pretrained checkpoint. Building the $\psi_W$ cache requires
$\sim 6$ minutes of RTX 5090 time and the full $\sim 13$\,GB checkpoint, a
$\sim 720\!\times$ end-to-end overhead. The closed-form route is
therefore not only empirically faithful but also strictly cheaper to
deploy.

\paragraph{Conclusion.}
The architectural assumption underlying $\psi_{\mathrm{MP}}$ does not break
down on the LoNAS-LLaMA-7B trained supernet. The two back-ends induce the
same Pareto ranking ($\tau\!=\!\rho\!=\!1.0000$), select identical LoRA-rank
profiles at every operating point, agree on FFN-sum at every operating point, and yield SVD-oracle regret below $0.11\%$.
Empirically, the dependence of $\psi$ on the architectural shape
$(m,n,s)$ dominates its dependence on the trained singular spectrum by
at least an order of magnitude in this search space. We therefore retain
$\psi_{\mathrm{MP}}$ as the canonical \ourmethod{} instantiation in the main paper
and use this consistency analysis as its empirical justification on
trained weights.


\section{Robustness to Initialization Convention}
\label{app:init_robustness}

The closed-form \ourmethod{} integral takes the per-weight initialization variance $s^2$
as input (matching the notation of Theorem~\ref{thm:mp}).
The main-paper headline numbers use Xavier for FlexiBERT (BERT-style) and TruncNorm-0.02 for GPT-2 (HuggingFace's default), matching each trained model's actual initialization.
To verify that ranking quality is not an artifact of this choice, we recompute \ourmethod{} scores on FlexiBERT (under both Serianni and Spec-faithful protocols) and GPT-2 under three common initialization conventions:
\textbf{Xavier} ($s = \sqrt{2/(m{+}n)}$),
\textbf{Kaiming fan-in} ($s = \sqrt{2/n}$),
and \textbf{TruncNorm-0.02} ($s = 0.02$).

\begin{table}[H]
\centering
\caption{%
  \ourmethod{} ranking quality under three initialization conventions.
  $\tau$: Kendall, $\rho$: Spearman vs.\ ground-truth.
  Pairwise $\rho_\mathrm{pair}$ measures rank agreement between two
  $s$ settings (reported for each row pair).
}
\label{tab:init_variance}
\small
\begin{tabular}{llcccc}
\toprule
\textbf{Benchmark} & \textbf{Init convention} & $\boldsymbol{\tau}$ & $\boldsymbol{\rho}$
  & $\boldsymbol{\rho_\mathrm{pair}}$ \textbf{vs.\ Xavier}
  & $\boldsymbol{\rho_\mathrm{pair}}$ \textbf{vs.\ Kaiming} \\
\midrule
\multirow{3}{*}{FlexiBERT (Serianni)}
  & Xavier         & 0.544 & 0.776 & ---   & 0.995 \\
  & Kaiming        & 0.546 & 0.777 & 0.995 & ---   \\
  & TruncNorm-0.02 & 0.517 & 0.751 & 0.880 & 0.901 \\
\midrule
\multirow{3}{*}{FlexiBERT (Spec-faithful)}
  & Xavier         & 0.695 & 0.884 & ---   & 0.997 \\
  & Kaiming        & 0.704 & 0.892 & 0.997 & ---   \\
  & TruncNorm-0.02 & 0.575 & 0.788 & 0.888 & 0.894 \\
\midrule
\multirow{3}{*}{GPT-2}
  & Xavier         & 0.814 & 0.950 & ---   & 0.999 \\
  & Kaiming        & 0.802 & 0.944 & 0.999 & ---   \\
  & TruncNorm-0.02 & 0.849 & 0.968 & 0.986 & 0.982 \\
\bottomrule
\end{tabular}
\end{table}

\autoref{tab:init_variance} shows that the three conventions produce
near-identical rankings:
Xavier--Kaiming pairwise Spearman $\rho \ge 0.995$ on every benchmark,
and even the constant-$s$ TruncNorm-0.02 preserves $\rho \ge 0.88$.
The absolute ranking quality ($\tau$, $\rho$ vs.\ ground-truth) also remains
stable, with the largest drop being $\Delta\rho = 0.10$ on FlexiBERT
(Spec-faithful) under TruncNorm-0.02.
Notably, GPT-2 (where HuggingFace's default init is TruncNorm-0.02) shows
\emph{higher} ranking quality under the matching convention ($\tau$: $0.814 \to 0.849$),
suggesting that matching $s$ to the trained model's actual init benefits \ourmethod{}'s accuracy.
This confirms that \ourmethod{} rankings are primarily driven by the layer-wise
\emph{shape} parameters $(m,n)$; the variance convention has secondary impact.


\end{document}